\documentclass{article} % For LaTeX2e
\usepackage[preprint]{colm2026_conference}

\usepackage{microtype}
\usepackage{hyperref}
\usepackage{url}
\usepackage{booktabs}
\usepackage{amsmath}
\usepackage{amssymb}
\usepackage{cleveref}
\usepackage{amsthm}
\usepackage{algorithm}
\usepackage{algpseudocode}
\usepackage{color}
\usepackage{subcaption}
\usepackage{graphicx}
 
\usepackage{ifthen}
\newboolean{DisplayComments}
\setboolean{DisplayComments}{false}
\DeclareRobustCommand{\aj}[1]{\ifthenelse{\boolean{DisplayComments}}{{\color{red} (AJ: #1)}}{}}
\DeclareRobustCommand{\yc}[1]{\ifthenelse{\boolean{DisplayComments}}{{\color{blue} (YC: #1)}}{}}
\DeclareRobustCommand{\sn}[1]{\ifthenelse{\boolean{DisplayComments}}{{\color{violet} (SN: #1)}}{}}

\newboolean{ShowDeletedParagraphs}
\setboolean{ShowDeletedParagraphs}{true}
\setboolean{ShowDeletedParagraphs}{false}
\DeclareRobustCommand{\delete}[1]{\ifthenelse{\boolean{ShowDeletedParagraphs}}{{\color{teal} (DELETE START: )} #1 {\color{teal} (DELETE END) }}{}}

\newtheorem{theorem}{Theorem}

\newtheorem{lemma}[theorem]{Lemma}

\usepackage{lineno}

\definecolor{darkblue}{rgb}{0, 0, 0.5}
\hypersetup{colorlinks=true, citecolor=darkblue, linkcolor=darkblue, urlcolor=darkblue}

\title{Safe Inference-Time Alignment via Lagrangian Reward Augmentation}

\author{
  Yaswanth Chittepu$^{1 \thanks{Correspondence to \texttt{ychittepu@umass.edu}}}$, Ativ Joshi$^{1}$, Sohini Chintala$^{2}$, Scott Niekum$^{1}$ \\
  $^{1}$University of Massachusetts Amherst, 
  $^{2}$Independent Researcher \\
}

\begin{document}

\ifcolmsubmission
\linenumbers
\fi

\maketitle

\begin{abstract}
Inference-time alignment steers a frozen language model during decoding using auxiliary reward signals, avoiding the cost of repeated weight updates. However, existing inference-time alignment methods typically optimize a single scalar score, so explicit safety constraints must either be ignored or encoded through manually tuned penalties. We propose \emph{Lagrangian Reward Augmentation} (LARA), a general inference-time alignment framework under safety constraints. 
% \yc{constrained decoding is a whole different field. Dont use this. Say inf-time alignment under safety constraints}\aj{done} 
Starting from a KL-regularized constrained objective with a reward model and a cost model, LARA dualizes the constraint and reduces the optimization problem to a one-dimensional convex problem over a nonnegative dual variable. Estimated on a small calibration set, this dual variable defines an augmented reward that can be used as a drop-in scoring signal within existing inference-time alignment methods. For sequence-level sampling methods, such as Best-of-\(N\) reranking, the calibrated dual variable corresponds to the solution of the expected-cost constrained problem. For token-level reward-guided decoding methods, the same construction yields a principled dual-calibrated heuristic rather than an exact constrained-policy guarantee. We evaluate LARA on both sequence-level and token-level inference-time alignment methods, and find that LARA improves the helpfulness-harmlessness tradeoff, with Best-of-N achieving the best performance among inference-time methods, approaching finetuning-based direct alignment baselines.\aj{optionally add a line about experiments outcome.}

\delete{\aj{Updated the new abstract above as per suggestions. Added the caveat about the principled heuristic for token level predictors.}
    Aligning large language models (LLMs) to human preferences typically requires finetuning model weights via Reinforcement Learning from Human Feedback (RLHF) \sn{RLHF is just one way to align; I wouldn't say it is typically required}, a procedure that is computationally expensive and must be repeated as alignment objectives evolve \sn{is an evolving objective common? example? is it more useful to talk about personalization that a changing objective?}. \sn{Since the abstract is too long, you could probably delete everything before this without losing meaningful content, with a small modification to the next sentence} Inference-time alignment offers a compelling alternative, leveraging reward models to guide decoding without modifying model weights. However, existing inference-time methods optimize a single reward signal, making it difficult to enforce explicit safety requirements — they either ignore safety entirely or collapse competing objectives such as helpfulness and harmlessness into a single scalar reward, requiring manual and brittle reweighting. We propose \emph{LAgrangian Reward Augmentation (LARA)}, a principled inference-time alignment framework that maximizes a helpfulness reward subject to a constraint on expected model harmlessness \sn{helpfulness and harmlessness are just one instantiation of maximization and constraint -- do you really want to limit to that? It is ok if it makes the story simpler, but this is a more general framework than that}. 
    % Using Lagrangian duality, we convert the constrained optimization problem into a one-dimensional convex optimization problem over the dual variable. We infer the optimal dual variable using a small calibration dataset, which defines an augmented reward combining helpfulness and harmlessness. This augmented reward serves as a drop-in replacement for any inference-time alignment algorithm while respecting the harmlessness constraint.
    We propose \emph{Lagrangian Reward Augmentation} (LARA), a principled inference-time alignment framework that maximizes a helpfulness reward subject to a constraint on expected harmfulness cost \sn{This is pretty much a repeat of the previous sentence. combine.}. Using Lagrangian duality, we reduce the constrained optimization problem to a one-dimensional convex optimization over the dual variable. We estimate this dual variable on a small calibration set, which induces an augmented reward combining helpfulness and harmfulness cost. This augmented reward can then be used as a drop-in scoring signal for existing inference-time alignment algorithms.
    Experiments demonstrate that LARA achieves a superior helpfulness-harmlessness tradeoff compared to inference-time alignment methods that use a fixed, hand-tuned penalty on harmlessness, and performs competitively against methods that involve model weight finetuning. \yc{Correct the last sentence post experiment results.} \yc{VERY IMPORTANT TO MAKE IT CLEAR THAT THE LAMBDA* SOLVES THE EXPECTED COST CONSTRAINT ONLY FOR SEQUENCE SAMPLING METHODS. FOR TOKEN SAMPLING RGTG METHODS, LAMBA* SERVES AS A PRINCIPLED HEURISTIC}
    }
\end{abstract}

\section{Introduction} 

Reinforcement Learning from Human Feedback (RLHF) \citep{ouyang2022training} has become a standard framework for aligning large language models with human preferences by first collecting preference data, then learning a reward model, and finally fine-tuning the model to prefer higher-reward outputs. In many deployments, however, usefulness alone is insufficient: modern assistants must also avoid harmful behavior, especially in sensitive domains such as medical consultation \citep{yang2022large, moor2023foundation}, legal reasoning \citep{katz2024gpt}, and educational support \citep{kasneci2023chatgpt, kung2023performance}. 
Safe RLHF \citep{dai2023safe}, motivated by the helpfulness and safety objectives conflict \citep{gehman2020realtoxicityprompts, weidinger2021ethical, ganguli2022red}, learns separate reward and cost models and optimizes under an explicit safety constraint.

However, traditional alignment still relies on policy fine-tuning, whether through the classical RLHF pipeline or through direct alignment methods such as DPO~\citep{rafailov2023direct} and related Direct Alignment Algorithms~\citep{rafailov2024scaling} that update model weights to satisfy preference data. Such retraining is expensive and may need to be repeated across applications, user populations, or changing safety requirements.
% Such retraining is expensive and often must be repeated as objectives, datasets, or safety requirements evolve. 
This cost has motivated a growing line of work on inference-time alignment, which leaves the base model frozen and instead steers decoding using auxiliary reward signals. Representative examples include sequence-level Best-of-N (BoN) reranking~\citep{stiennon2022learningsummarizehumanfeedback, nakano2022webgptbrowserassistedquestionansweringhuman, beirami2024theoretical} and token-level reward-guided search methods~\citep{khanov2024args, rashid2024critical, mudgal2023controlled, deng-raffel-2023-reward, rashid2025towards, han2024valueaugmentedsamplinglanguage}, all of which seek to obtain better aligned generations without running a full alignment stage, that would involve expensive model fine-tuning.

Despite this promise, existing inference-time alignment methods do not provide a satisfactory solution for \emph{safe} alignment. They require the practitioner to specify a scalar decoding score ahead of time, so when helpfulness and safety compete, the trade-off is typically hand-tuned rather than derived from a principled constrained objective. Sequence-level reranking methods can select high-scoring responses, but they do not by themselves enforce an explicit safety budget during selection. Token-level reward-guided methods can bias decoding online, but they still optimize a pre-specified scalar score rather than the solution of a constrained safe-alignment problem. As a result, current inference-time methods offer control, but not a principled mechanism for translating a safety constraint into a decoding objective.

We address this gap with \emph{Lagrangian Reward Augmentation} (LARA), an inference-time alignment framework that transfers the constrained trade-off of Safe RLHF~\citep{dai2023safe} to decoding. The key idea is to dualize the safety constraint, yielding a single nonnegative multiplier \(\lambda\) and the augmented reward
\(
r_\lambda(x,y) = r(x,y) - \lambda c(x,y).
\)
Calibrating \(\lambda\) on a small set of prompts produces a safety-aware score that can be plugged into existing inference-time alignment methods. For sequence-level samplers such as Best-of-\(N\)~\citep{stiennon2022learningsummarizehumanfeedback, nakano2022webgptbrowserassistedquestionansweringhuman, beirami2024theoretical}, this construction corresponds directly to the expected-cost constrained optimum. For token-level reward-guided decoders~\citep{khanov2024args, rashid2024critical, mudgal2023controlled, deng-raffel-2023-reward, rashid2025towards, han2024valueaugmentedsamplinglanguage}, it instead provides a principled dual-calibrated heuristic induced by the same constrained objective.

% We propose an inference-time Safe-RLHF \citep{dai2023safe} algorithm that skips the need to align the model by updating model weights, either through Reinforcement Learning or Supervised Learning using a Direct Alignment Algorithm (DAA) \citep{rafailov2024scaling}. 

\delete{\aj{Added a new paragraph above the two paragraphs below as SN suggested.}
We address this gap with \emph{LAgrangian Reward Augmentation} (LARA), an inference-time safe-alignment framework that starts from the Safe RLHF \citep{dai2023safe}
constrained objective and pushes the safety trade-off into decoding rather than into weight updates.
We use the reward and cost models to augment the sampling process, producing responses that are both helpful and harmless. Existing work only considers reward-augmented sampling under a single reward model \citep{beirami2024theoretical,khanov2024args,mudgal2023controlled,deng-raffel-2023-reward,rashid2024critical,rashid2025towards,yang2021fudge,han2024valueaugmentedsamplinglanguage} and does not consider sampling under a constrained optimization setting, where two objectives can compete with one another. In order to accomplish this, we look at the dual of the Safe-RLHF \citep{dai2023safe} optimization problem.

% \sn{start new paragraph here} 
Leveraging the closed-form expression for the optimal policy of a KL regularized reward maximization objective \citep{peng2019advantage, rafailov2023direct}, we show that 
% the dual optimization problem is a minimization over a one-dimensional convex objective.
%%%%%%%%%%%%%
% Our key observation is that, 
after dualizing the safety constraint, the optimization over policies reduces to a one-dimensional convex problem over a nonnegative dual variable. 
% \aj{NEW SENTENCE STARTS} 
This dual variable induces an augmented reward of the form
\(
r_{\lambda}(x,y) = r(x,y) - \lambda c(x,y),
\)
which balances helpfulness against harmfulness cost in a principled way. Moreover, the dual gradient is monotone, which allows the calibration step to be carried out efficiently by binary search on a small calibration set.
% \aj{NEW SENTENCE ENDS}
% We use a small calibration dataset to optimize this objective and use the optimal value of the dual variable to define the net reward objective. \yc{Add this somehow: We exploit monotonicity of the dual gradient, allowing us to evaluate the optimum using binary search, leading to an extremely fast dual optimization problem}. 
% Having access to this net reward objective, we can use use inference-time sampling approaches~\citep{beirami2024theoretical, mudgal2023controlled,khanov2024args,rashid2024critical} to generate responses that are helpful, while respecting the harmfulness budget constraint. 
Using this calibrated augmented reward, we can plug LARA into existing inference-time alignment methods~\citep{beirami2024theoretical, mudgal2023controlled,khanov2024args,rashid2024critical} to bias decoding toward more helpful responses while accounting for the harmfulness budget in a principled expected-cost sense.
Apart from BoN which operates at the response level, all the other listed approaches are token-based sampling strategies that incrementally sample tokens using reward augmented next-token probabilities. \yc{closing statement abt experiment result. Also, how about a last paragraph, that explicitly lists our contributions.} \sn{This para is too long and goes into too much detail. Can get rid of some aspects like binary search that are covered in next para)}}

Our contributions are as follows. First, we formulate safe inference-time alignment through the dual of the Safe RLHF constrained objective, showing that the original policy optimization reduces to a one-dimensional convex calibration problem whose solution defines a safety-calibrated augmented reward. Second, we characterize the geometry of the dual objective, establishing monotonicity of the dual gradient and thereby yielding an efficient binary-search calibration procedure. Third, we prove a finite-sample guarantee for estimating the optimal dual variable from a small calibration dataset.\aj{Edit the last sentence after experiments.}
Fourth, we show how the resulting augmented reward can be used within a broad class of existing inference-time alignment methods, with an exact expected-cost interpretation for sequence-level samplers and a principled heuristic interpretation for token-level reward-guided decoders. Experiments show that LARA improves the helpfulness-harmlessness tradeoff, with Best-of-N achieving the best performance among inference-time methods, approaching finetuning-based direct alignment baselines
\section{Background}
\label{sec:background}

\subsection{Reinforcement Learning from Human Feedback}

Reinforcement Learning from Human Feedback (RLHF)~\citep{ouyang2022training} is the dominant paradigm for aligning language models to human preferences. The standard RLHF paradigm involves three stages \citep{ouyang2022training}. The first stage is Supervised Fine Tuning (SFT), where the model is trained on high-quality instruction data provided by human or LLM annotators, through the next-token prediction objective. The second stage is Reward Modeling (RM), wherein a reward model $r$ is trained to capture human preference. The reward model is trained on preference datasets using the Bradley-Terry preference model \citep{bradley1952rank}. In the final stage, the policy or language model is trained to produce responses preferred by humans through Reinforcement Learning (RL) using the learned reward model. In order to preserve language quality and prevent reward overoptimization\citep{ouyang2022training,stiennon2022learningsummarizehumanfeedback,Gao2022ScalingLF,rafailov2024scaling}, RLHF optimizes a KL regularized reward objective in the final stage. 

\begin{align}
  \max_\theta\;\mathbb{E}_{x \sim D_x,\, y \sim \pi_\theta(\cdot\mid x)}\!\big[r(x,y)\big]\;-\;\beta\,D_{\mathrm{KL}}\!\Big(\pi_\theta(\cdot\mid x)\,\Vert\,\pi_{\mathrm{ref}}(\cdot\mid x)\Big),
\end{align}

\subsection{Safe RLHF}

Standard RLHF optimizes a single reward function, which may be inadequate in the presence of competing objectives such as \emph{helpfulness} and \emph{harmlessness}. Safe RLHF separates these two signals, by learning a reward model $r_{\phi}(x,y)$ from helpfulness preference labels, and a cost model $c_{\psi}(x,y)$ from harmfulness preference labels and then frames learning as a constrained optimization problem.

\begin{align}
  \max_\theta\;\mathbb{E}_{x \sim D_x,\, y \sim \pi_\theta(\cdot\mid x)}\!\big[r(x,y)\big] - \beta D_{\mathrm{KL}}(\pi_\theta(\cdot\mid x)\Vert \pi_{\mathrm{ref}}(\cdot\mid x)) \quad\text{s.t.}\quad \mathbb{E}_{x \sim D_x,\, y \sim \pi_\theta(\cdot\mid x)}\!\big[c(x,y)\big]\le \tau.
  \label{eq:safe-rlhf-obj}
  % \vspace{-0.3in}f
\end{align}

Follow-up works, such as HC-RLHF \citep{chittepu2025reinforcement} replace the expectation constraint with a high-confidence probabilistic safety guarantee using the Seldonian framework \citep{thomas2019preventing}. RAD \citep{chittepu2026safe} instead imposes a distributional dominance constraint, which they solve using optimal transport~\citep{peyre2020computationaloptimaltransport}. \citet{kim2025safedpo} propose SafeDPO, a direct alignment variant of Safe-RLHF that optimizes the safety-constrained RLHF problem using supervised learning. Unlike these methods, which modify the training objective and update model weights, LARA leaves the base model frozen and transfers the reward--cost trade-off to inference time through decoder-side calibration. %\sn{short description of how LARA is different from these?}\aj{done}

% \yc{CORRECT HERE. I JUST LISTED THE FLOW. MAKE CHANGES AND CORECTIONS AS NEEDED!}
\subsection{Inference-time Alignment}

Inference-time alignment seeks to avoid weight updates altogether by steering a frozen base model during decoding. At a high level, sequence-level methods~\citep{nakano2022webgptbrowserassistedquestionansweringhuman, stiennon2022learningsummarizehumanfeedback, beirami2024theoretical} sample several complete responses and select the highest scored response, by a reward model, while token-level reward-guided methods~\citep{khanov2024args, rashid2024critical, mudgal2023controlled, deng-raffel-2023-reward, rashid2025towards, han2024valueaugmentedsamplinglanguage} modify next-token probabilities using an auxiliary scoring function. For a reference policy \(\pi_{\mathrm{ref}}\), a prompt \(x\), and a partial response \(y_{1:t-1}\), a token-level reward-guided rule takes the form
\[
\mathrm{score}(y_t \mid x, y_{1:t-1})
=
\log \pi_{\mathrm{ref}}(y_t \mid x, y_{1:t-1}) + w \, r(x, y_{1:t}),
\]
which induces the tilted distribution
\[
\mathrm{softmax}\!\bigl(\mathrm{score}(y_t \mid x, y_{1:t-1})\bigr)
\propto
\pi_{\mathrm{ref}}(y_t \mid x, y_{1:t-1}) \exp\!\bigl(w\, r(x, y_{1:t})\bigr).
\]
Different inference-time alignment methods differ primarily in how this scoring signal is defined or approximated; we discuss these variants in Appendix~\ref{sec:related_work}. LARA is orthogonal to this decoder design choice: it calibrates the scalar trade-off between reward and cost and then supplies the resulting augmented score to any chosen inference-time procedure, thereby replacing manual scalar trade-off tuning with a principled safety-constrained construction.

\delete{\aj{Written a new shorter paragraph above.}
\aj{Keep only the equations and move the citations to the related works section?}
Inference-time alignment seeks to avoid weight updates in RLHF and direct alignment algorithms altogether by steering a frozen base model during decoding. 
Best-of-N reranking~\citep{nakano2022webgptbrowserassistedquestionansweringhuman, stiennon2022learningsummarizehumanfeedback, beirami2024theoretical} samples several complete responses and selects the one with the best reward score, giving a simple sequence-level test-time alignment mechanism with theoretical guarantees. Alternative methods, which involve token-level reward-guided sampling, were also proposed. These Reward Guided Text Generation (RGTG) \citep{khanov2024args} methods use a scoring function, such as the reward model, to modify the token likelihoods. 
For a reference base policy $\pi_{\mathrm{ref}}$, a prompt $x$ and a partial response $y_{1:t-1}$, the reward modulated scores of tokens at time step $t$ is given by
\begin{equation}
    \mathrm{score}(y_t \vert x, y_{1:t-1}) = \log \pi_{\mathrm{ref}}(y_t \vert x, y_{1:t-1}) + w \cdot r_{\phi}(x, y_{1:t})
\end{equation}

The tokens are then sampled from the softmax distribution over scores, which results in a token-wise Gibbs tilted policy that is structurally similar to the optimal KL-regularized reward policy (but which is at the sentence level):

\begin{equation}
    \mathrm{softmax}\big(\mathrm{score}(y_t \vert x, y_{1:t-1})\big) \propto \pi_{\mathrm{ref}}(y_t \vert x, y_{1:t-1}) \exp{(w \; r_{\phi}(x, y_{1:t}))}
\end{equation}

Multiple RGTG methods have been proposed, differing primarily in their choice of scoring function. ARGS~\citep{khanov2024args} uses a reward signal trained on full sequences to evaluate partial sequences at decoding time \sn{do you need this level of detail?}. PARGS~\citep{rashid2024critical} shows that full-sequence reward models are generally mismatched to prefix-level scoring, motivating the use of a partial reward model to score partial generations at decoding time. It should be noted that PARGS assumes that partial reward models inherit the preference labeling of full sequences. Controlled Decoding~\citep{mudgal2023controlled}, Value-Augmented Sampling (VAS)~\citep{han2024valueaugmentedsamplinglanguage}, and FUDGE~\citep{yang2021fudge} learn a token-level value function and use it as the scoring function to bias next-token probabilities toward desired attributes. Reward-Augmented Decoding~\citep{deng-raffel-2023-reward} likewise injects reward information into token-level generation, but is designed around a unidirectional reward model to make controlled decoding more compute efficient. FaRMA~\citep{rashid2025towards} takes this further by using the optimal value function as the scoring function: given a partial sequence $y_{1:i}$, it considers all possible full extensions and assigns the score of the highest-reward completion to $y_{1:i}$. This value function is trained using TD learning and resembles the Bellman optimality equation in reinforcement learning \citep{sutton1998reinforcement}. \sn{Would you lose much by just listing that there are a lot of methods that do different things and that we are orthogonal? Or do you need some of these details later?} 

% \yc{Isnt this the same as the last paragraph, just before deletion. Combine?} 
Our work is orthogonal to these decoder designs: rather than proposing yet another search rule, it derives a calibrated helpfulness-versus-harmlessness objective from the dual of the constrained Safe RLHF problem and uses that calibrated score as a drop-in signal for existing inference-time methods, thereby replacing manual scalar trade-off tuning with a principled safety-constrained construction.}

% \yc{My write up abt Inference time ends here..}

% \yc{What if we have a seperate related work section. We can fouce on techniques in the Background and point related work in related work section}

% \input{sections/method_new.tex}
\section{Method: LAgrangian Reward Augmentation (LARA)}
\label{sec:method}
% \aj{A brief sketch of the structure of the section will go here.}

Our goal is to perform safety-aware alignment at inference time without updating model
weights. We begin from the Safe RLHF objective, which separates helpfulness and safety
through a reward model and a cost model, and ask whether its constrained optimization
structure can be transferred from training time to decoding time. LAgrangian Reward Augmentation (LARA) does so by moving
the safety trade-off into a single scalar dual variable: once this variable is calibrated, it
induces an augmented reward that can be used by any standard inference-time alignment
procedure.

% This section proceeds in two steps. Section~\ref{sec:model} derives the dual formulation of the Safe RLHF
% objective and shows that the original optimization over policies reduces to a one-dimensional
% convex minimization problem over the dual variable $\lambda$. 
% Section~\ref{sec:method_estimation} then shows how
% to estimate this variable from a finite calibration set.
% including the practically important case where the inner expectation under the reference model is itself approximated empirically 
% \yc{Isn't this from theory. The empirical approximation is what the end algo uses anyway. Also, this inner expectation is not defined as of now. Overall, a very confusing sentence}.\aj{edited}
% The output of this calibration step is a reward of the form
% \[
% r_\lambda(x,y)=r(x,y)-\lambda  c(x,y),
% \]
% which can be passed as a drop-in scoring signal to any existing inference-time decoding algorithm.
% Thus, LARA should be viewed as a decoder-agnostic calibration layer rather than as a new
% decoding algorithm.

This section proceeds in two steps. Section~\ref{sec:model} presents the model, the dual formulation, and the empirical calibration procedure used to estimate the dual variable from a small calibration set. The output of this calibration step is a reward of the form
\[
r_\lambda(x,y)=r(x,y)-\lambda  c(x,y),
\]
which can be passed as a drop-in scoring signal to any existing inference-time decoding algorithm.
Thus, LARA should be viewed as a decoder-agnostic calibration layer rather than as a new
decoding algorithm.
Section~\ref{sec:method_estimation} then gives the theoretical guarantee for this estimator under finite samples.

\subsection{Model and Estimation}
\label{sec:model}

% \sn{This section is long and detaled and could use a high level roadmap so that readers dont get lost}\aj{Done. Added a roadmap at the end of this paragraph. Technical details start from the next paragraph.} 
Our starting point is the Safe RLHF formulation, which separates helpfulness and safety through
a reward model $r(x,y)$ and a cost model $c(x,y)$, where $x$ is a prompt and $y$ is
a response. This subsection has three steps. We first state the constrained objective and explain why strong duality allows us to work with its dual. We then derive the resulting one-dimensional dual objective and the corresponding Gibbs-tilted policy for a fixed multiplier \(\lambda\). Finally, we describe how \(\lambda\) is estimated empirically from a small calibration set and how the resulting augmented reward is passed to an existing inference-time decoder.

We assume throughout that the reward and cost models are fixed and bounded:
there exist $R,C>0$ such that $|r(x,y)| \le R$ and $|c(x,y)| \le C$ for all $(x,y)$.
Let $D_x$ denote the prompt distribution, let $\pi_{\mathrm{ref}}(\cdot \mid x)$ denote the reference
distribution over responses for prompt $x$, let $\beta>0$ be the KL regularization coefficient,
and let $\tau \in [-C,C]$ 
% \yc{THIS IS FALSE. WE USUALLY USE NEGATIVE TAUS SINCE NEGATIVE MEANS HARMLESS} 
be the safety budget. We consider the constrained optimization problem
% We begin from the Safe RLHF formulation, which separates helpfulness and harmlessness by learning a reward model $r_{\phi}(x,y)$ and a cost model $c_{\psi}(x,y)$, where $x$ is a prompt and $y$ is a response. We assume that the reward and cost models are fixed and given, and that they are bounded: there exist $R,C>0$ such that $|r_{\phi}(x,y)|\le R$ and $0\le c_{\psi}(x,y)\le C$ for all $(x,y)$. 
% Let $D_x$ denote the prompt distribution, let $\pi_{\mathrm{ref}}(\cdot \mid x)$ denote the reference distribution over responses for prompt $x$, let $\beta > 0$ be the KL regularization coefficient, and let $\tau \in [0,C]$ be the harmlessness budget. The Safe RLHF constrained optimization problem is
\begin{equation}
\label{eq:primal}
\sup_{\pi}\;
\mathbb{E}_{x\sim D_x}
\left[
\mathbb{E}_{y\sim \pi(\cdot \mid x)} r_{\phi}(x,y)
-
\beta D_{\mathrm{KL}}\!\left(\pi(\cdot \mid x)\,\|\,\pi_{\mathrm{ref}}(\cdot \mid x)\right)
\right]
\text{ s.t. }
\mathbb{E}_{x\sim D_x,\; y\sim \pi(\cdot \mid x)} c_{\psi}(x,y) \le \tau,
\end{equation}
where the optimization is over conditional distributions $\pi(\cdot\mid x)$ that are absolutely
continuous with respect to $\pi_{\mathrm{ref}}(\cdot\mid x)$ for $D_x$-almost every $x$, so that the
conditional KL divergence is well defined. 
% \aj{Added the following lines to explicitly state our assumption that the slater condition holds, and hence it is safe to invoke strong duality} 
% We assume the constrained problem in Equation~\eqref{eq:primal} is feasible. 
% Note that the constrained optimization problem in Equation~\eqref{eq:primal} is convex: the objective is concave in $\pi$, and the expected-cost constraint is affine. We assume that the problem is feasible and the Slater condition holds: there exists a policy $\tilde\pi$ whose expected cost is strictly below $\tau$. Thus, strong duality holds, and minimizing the dual objective over a dual variable$\lambda \ge 0$ recovers the primal optimum.
% Note that the constrained optimization problem in Equation~\eqref{eq:primal} is a concave maximization problem in the policy variable $\pi$: the objective is concave in $\pi$, and the expected-cost constraint is affine. We assume that the Slater condition holds, i.e., there exists a policy $\tilde{\pi}$ whose expected cost is strictly below $\tau$. Therefore, strong duality holds, and minimizing the dual objective over $\lambda \ge 0$ recovers the primal optimum.

The optimization problem in \eqref{eq:primal} is a concave maximization problem in the policy variable \(\pi\): the objective is concave in \(\pi\), and the expected-cost constraint is affine. We assume a Slater condition, namely that there exists a feasible policy \(\tilde{\pi}\) whose expected cost is strictly below \(\tau\). Under this regularity condition, strong duality holds. Consequently, solving the primal problem is equivalent to minimizing the dual function over \(\lambda \ge 0\).

% Introducing a dual variable $\lambda \ge 0$, we define the net reward
The key observation behind LARA is that, after dualizing the safety constraint, the
infinite-dimensional optimization over policies reduces to a one-dimensional optimization over
a scalar dual variable. For $\lambda \ge 0$, define the net reward
\begin{equation}
\label{eq:net-reward}
r_{\lambda}(x,y) := r_{\phi}(x,y) - \lambda c_{\psi}(x,y),
\end{equation}
and the corresponding Lagrangian
\begin{equation}
\label{eq:lagrangian}
\mathcal L(\pi,\lambda)
:=
\mathbb{E}_{x\sim D_x}
\left[
\mathbb{E}_{y\sim \pi(\cdot \mid x)} r_{\lambda}(x,y)
-
\beta D_{\mathrm{KL}}\!\left(\pi(\cdot \mid x)\,\|\,\pi_{\mathrm{ref}}(\cdot \mid x)\right)
\right]
+
\lambda \tau.
\end{equation}
The associated dual function is
\begin{equation}
\label{eq:dual-function}
g(\lambda) := \sup_{\pi} L(\pi,\lambda),  \qquad \lambda \ge 0,
\end{equation}
where the supremum is taken over the same class of policies as in the primal problem. The dual problem is then to minimize $g$ over $\lambda \ge 0$:
\begin{equation}
\label{eq:dual-problem}
\inf_{\lambda \ge 0} g(\lambda).
\end{equation}
The next theorem shows that this dual objective admits a closed form, and that for a fixed
value of $\lambda$, the optimal policy is a Gibbs tilt of the reference model by the net reward
$r_\lambda$. 
% The proof is deferred to Appendix~\ref{sec:appendix-duality}.
The derivation is similar in spirit to the KL-regularized optimal-policy characterization used by \cite{rafailov2023direct}, we provide the full proof in Appendix~\ref{sec:appendix-duality} for completeness.

% write a brief note that the primal problem is convex, and

% \yc{Why the assumption $\lambda \geq 0$. This is just a constraint, and not that $g(\lambda)$ is defined only for $\lambda $}\aj{It's not an assumption, we're just specifying the domain of the function we are talking about. All statements where assumptions are made explicitly contain the phrase "assume that".}  
\begin{theorem}[Closed-form dual objective]
\label{thm:dual-closed-form}
% Assume that for every $\lambda \ge 0$ under consideration, \aj{do we need to state this explicity?}
For every $\lambda \ge 0$,
% \[
% \mathbb{E}_{x\sim D_x}
% \left[
% \left|
% \log
% \mathbb{E}_{y\sim \pi_{\mathrm{ref}}(\cdot \mid x)}
% \exp\!\left(
% \frac{r_{\phi}(x,y)-\lambda c_{\psi}(x,y)}{\beta}
% \right)
% \right|
% \right]
% < \infty.
% \]
% Then, for every $\lambda \ge 0$,
\begin{equation}
\label{eq:dual-objective}
g(\lambda)
=
\beta\,
\mathbb{E}_{x\sim D_x}
\left[
\log
\mathbb{E}_{y\sim \pi_{\mathrm{ref}}(\cdot \mid x)}
\exp\!\left(
\frac{r_{\phi}(x,y)-\lambda c_{\psi}(x,y)}{\beta}
\right)
\right]
+
\lambda \tau.
\end{equation}
Moreover, for each prompt $x$, the supremum in equation~\eqref{eq:dual-function} is attained by the Gibbs-tilted policy %\yc{supremum over what}
\begin{equation}
\label{eq:gibbs-policy}
\pi_{\lambda}^{\star}(y\mid x)
=
\frac{
\pi_{\mathrm{ref}}(y\mid x)\,
\exp\!\left(\frac{r_{\lambda}(x,y)}{\beta}\right)
}{
\mathbb{E}_{y'\sim \pi_{\mathrm{ref}}(\cdot \mid x)}
\exp\!\left(\frac{r_{\lambda}(x,y')}{\beta}\right)
}.
\end{equation}
\end{theorem}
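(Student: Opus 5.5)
The plan is to fix $\lambda\ge 0$ and solve the inner supremum in \eqref{eq:dual-function} prompt by prompt. The Lagrangian \eqref{eq:lagrangian} is an expectation over $x\sim D_x$ of a per-prompt functional plus the constant $\lambda\tau$. That per-prompt functional is
\[
J_x(\pi(\cdot\mid x)) := \mathbb{E}_{y\sim\pi(\cdot\mid x)} r_\lambda(x,y) - \beta D_{\mathrm{KL}}\!\left(\pi(\cdot\mid x)\,\|\,\pi_{\mathrm{ref}}(\cdot\mid x)\right).
\]
The policy class places no coupling constraint across prompts. So it suffices to show that, for each $x$, $\sup J_x = \beta\log Z_\lambda(x)$, where
\[
Z_\lambda(x) := \mathbb{E}_{y\sim\pi_{\mathrm{ref}}(\cdot\mid x)}\exp\!\left(r_\lambda(x,y)/\beta\right),
\]
and that this supremum is attained by $\pi^\star_\lambda(\cdot\mid x)$ in \eqref{eq:gibbs-policy}. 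Integrating over $x$ and adding $\lambda\tau$ then gives \eqref{eq:dual-objective}.

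\textbf{Finiteness.} By boundedness, $|r_\lambda|\le R+\lambda C$. Hence
\[
Z_\lambda(x)\in\left[e^{-(R+\lambda C)/\beta},\, e^{(R+\lambda C)/\beta}\right].
\]
So $\log Z_\lambda(x)$ is bounded uniformly in $x$, and $\pi^\star_\lambda$ is a well-defined probability measure, absolutely continuous with respect to $\pi_{\mathrm{ref}}$ with bounded density. This is also why no integrability hypothesis is needed in the statement.

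\textbf{Gibbs variational identity.} Take any admissible $\pi(\cdot\mid x)\ll\pi_{\mathrm{ref}}(\cdot\mid x)$. Since $\pi^\star_\lambda$ and $\pi_{\mathrm{ref}}$ are mutually absolutely continuous, $\pi\ll\pi^\star_\lambda$. Writing
\[
\log\frac{d\pi}{d\pi_{\mathrm{ref}}} = \log\frac{d\pi}{d\pi^\star_\lambda} + \frac{r_\lambda}{\beta} - \log Z_\lambda(x)
\]
and taking expectations under $\pi$ yields
\[
J_x(\pi) = \beta\log Z_\lambda(x) - \beta D_{\mathrm{KL}}\!\left(\pi\,\|\,\pi^\star_\lambda(\cdot\mid x)\right).
\]
Nonnegativity of KL gives $J_x(\pi)\le \beta\log Z_\lambda(x)$, with equality exactly when $\pi=\pi^\star_\lambda(\cdot\mid x)$. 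This proves the attainment claim. The same identity handles the case $D_{\mathrm{KL}}(\pi\|\pi_{\mathrm{ref}})=\infty$: then $J_x=-\infty$, consistent with the bound, because $r_\lambda$ is bounded and so all expectations of $r_\lambda$ are finite.

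\textbf{Main obstacle: passing from pointwise to averaged supremum.} The bound $\mathcal L(\pi,\lambda)\le \beta\,\mathbb{E}_x\log Z_\lambda(x)+\lambda\tau$ follows immediately by integrating the pointwise inequality. For the matching lower bound, I would plug in the specific policy $\pi^\star_\lambda$. This requires checking that $x\mapsto\pi^\star_\lambda(\cdot\mid x)$ is a valid (measurable) Markov kernel. That holds since its density is a measurable function of $(x,y)$ built from $r,c$ and the kernel $\pi_{\mathrm{ref}}$. It also requires that $\mathbb{E}_x\log Z_\lambda(x)$ is finite, which holds by the uniform bound above. So the supremum is attained globally, giving $g(\lambda)$ in closed form.
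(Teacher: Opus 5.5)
Your proposal is correct and follows the paper's proof in its essentials: decouple the supremum across prompts, complete the KL divergence against the Gibbs tilt to get $J_x(\pi)=\beta\log Z_\lambda(x)-\beta D_{\mathrm{KL}}(\pi\,\|\,\pi^\star_\lambda(\cdot\mid x))$, and conclude by nonnegativity of KL. The paper also motivates $\pi^\star_\lambda$ with a first-order Lagrange-multiplier computation in the countable-response case and simply asserts the decoupling. You instead work with general measures and explicitly justify the finiteness of $\log Z_\lambda$, the measurability of the kernel, and the passage from the pointwise to the averaged supremum, which makes your version a somewhat more careful rendering of the same argument.
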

Theorem~\ref{thm:dual-closed-form} is the main structural step in our method. Equation~\eqref{eq:dual-objective} replaces the original
optimization over policies by a one-dimensional minimization over $\lambda$, while
Equation~\eqref{eq:gibbs-policy} shows that, for a fixed dual variable, the optimal policy is obtained by tilting
the reference model toward responses with high net reward. This is precisely the object that
LARA will later approximate. %\yc{Dont make claims about inference guarantees. We just plug different inference methods in.}\aj{deleted the problematic part}.

Clearly, for \(\lambda=0\), the dual objective reduces to the standard KL-regularized reward maximization objective, and the Gibbs-tilted policy reduces to the standard reward-only tilt. As \(\lambda\) increases, the dual objective incorporates an increasing penalty on harmfulness, and the Gibbs-tilted policy incorporates an increasing penalty on responses with high cost.
The optimal dual variable \(\lambda^{\star}\) balances these competing objectives to achieve the best helpfulness subject to the harmlessness budget. %\aj{How do I justify the assumption that \(\lambda^* \in (0,\Lambda)\) for some \(\Lambda < \infty\)? I guess we can just say that if the optimum occurs at \(\lambda^* = 0\), then the constraint is inactive and the Gibbs tilt reduces to the standard reward-only tilt, so there is no need to do any calibration.}

We next characterize the geometry of the dual objective, see Appendix~\ref{sec:lemma-proof} for the proof.
\begin{lemma}[Derivative and Hessian of the dual objective]
\label{lem:dual-derivatives}
% Assume differentiation under the expectation is justified. Then 
The first and second derivatives of $g$ are
\begin{equation}
\label{eq:dual-derivative}
g'(\lambda)
=
\tau
-
\mathbb{E}_{x\sim D_x,\; y\sim \pi_{\lambda}^{\star}(\cdot \mid x)}
\big[c_{\psi}(x,y)\big],
\end{equation}
and
\begin{equation}
\label{eq:dual-hessian}
g''(\lambda)
=
\frac{1}{\beta}\,
\mathbb{E}_{x\sim D_x}
\left[
\operatorname{Var}_{y\sim \pi_{\lambda}^{\star}(\cdot \mid x)}
\big(c_{\psi}(x,y)\big)
\right]
\ge 0.
\end{equation}
In particular, $g$ is convex on $[0,\infty)$. 
Moreover, if we restrict $\lambda$ to a compact
interval $I:=[0,\Lambda]$ (for some $\Lambda < \infty$) and the variance term in \eqref{eq:dual-hessian} is continuous and bounded away
from zero on $I$, then there exists $\mu>0$ such that $g''(\lambda)\ge \mu$ for all
$\lambda\in I$, so $g$ is strongly convex on $I$.
% If the cost model is non-degenerate (i.e., has positive variance under the Gibbs-tilted policy for all $\lambda$ under consideration), 
% then 
% % $g$ is strictly convex and has a unique minimizer. Further, 
% there exists \(\mu > 0\) such that \(g''(\lambda) \ge \mu\) for all \(\lambda\) under consideration, so \(g\) is strongly convex. 
% \aj{also comment on the monotonicity of the derivative and binary search for the optimum.}
\end{lemma}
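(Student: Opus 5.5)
We start from the closed form of Theorem~\ref{thm:dual-closed-form} and write $g(\lambda)=\beta\,\mathbb{E}_{x\sim D_x}[A_x(\lambda)]+\lambda\tau$ with the per-prompt log-partition function
\[
A_x(\lambda):=\log Z_x(\lambda),\qquad Z_x(\lambda):=\mathbb{E}_{y\sim\pi_{\mathrm{ref}}(\cdot\mid x)}\exp\!\left(\frac{r_{\phi}(x,y)-\lambda c_{\psi}(x,y)}{\beta}\right).
\]
The plan is to differentiate $Z_x$ twice under the inner expectation, then differentiate $A_x$ by the chain rule, and finally pass the derivative through the outer expectation over $x$.

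\textbf{Interchange of derivative and expectation.} Boundedness of $r_\phi$ and $c_\psi$ justifies every interchange. For $\lambda$ in any bounded interval $[0,\Lambda']$, the integrand $\exp((r-\lambda c)/\beta)$ and its first two $\lambda$-derivatives, $(-c/\beta)^k\exp((r-\lambda c)/\beta)$ for $k=1,2$, are uniformly dominated by the constant $(1+C/\beta)^2 e^{(R+\Lambda' C)/\beta}$. Dominated convergence therefore gives
\[
Z_x'(\lambda)=-\tfrac1\beta\,\mathbb{E}_{\pi_{\mathrm{ref}}}[c\,e^{r_\lambda/\beta}],\qquad Z_x''(\lambda)=\tfrac1{\beta^2}\,\mathbb{E}_{\pi_{\mathrm{ref}}}[c^2 e^{r_\lambda/\beta}].
\]
Since $Z_x(\lambda)\ge e^{-(R+\Lambda' C)/\beta}>0$, the function $A_x$ is smooth. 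Its derivatives $A_x'$ and $A_x''$ will turn out to be bounded by $C/\beta$ and $C^2/\beta^2$ respectively, uniformly in $x$, which justifies differentiating under $\mathbb{E}_{x\sim D_x}$ as well.

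\textbf{Computing the derivatives.} By the chain rule, $A_x'=Z_x'/Z_x$. Recognizing the Gibbs tilt \eqref{eq:gibbs-policy}, we have $\mathbb{E}_{\pi_{\mathrm{ref}}}[f\,e^{r_\lambda/\beta}]/Z_x=\mathbb{E}_{\pi^\star_\lambda}[f]$ for any bounded $f$. Hence $A_x'(\lambda)=-\tfrac1\beta\mathbb{E}_{\pi^\star_\lambda(\cdot\mid x)}[c]$. Multiplying by $\beta$, taking the outer expectation, and adding $\tau$ gives \eqref{eq:dual-derivative}. Next,
\[
A_x''=\frac{Z_x''}{Z_x}-\left(\frac{Z_x'}{Z_x}\right)^2=\tfrac1{\beta^2}\left(\mathbb{E}_{\pi^\star_\lambda}[c^2]-(\mathbb{E}_{\pi^\star_\lambda}[c])^2\right)=\tfrac1{\beta^2}\operatorname{Var}_{\pi^\star_\lambda}(c).
\]
Multiplying by $\beta$ and averaging over $x$ yields \eqref{eq:dual-hessian}. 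Nonnegativity follows because a variance is nonnegative, and convexity of $g$ on $[0,\infty)$ follows from $g''\ge0$. Equivalently, each $A_x$ is a log-moment-generating function of $-c/\beta$ under $\pi_{\mathrm{ref}}$, hence convex.

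\textbf{Strong convexity on $I$.} Let $v(\lambda):=\mathbb{E}_x[\operatorname{Var}_{\pi^\star_\lambda}(c)]$. Under the stated hypothesis, $v$ is continuous on the compact set $I=[0,\Lambda]$ and bounded away from zero, i.e.\ $v\ge v_0>0$ on $I$. Setting $\mu:=v_0/\beta$ gives $g''\ge\mu$ on $I$. (Continuity itself actually follows from dominated convergence, and positivity of the minimum then follows from the extreme value theorem, but the statement assumes both.)

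\textbf{Main obstacle.} The only delicate point is the rigorous interchange of differentiation with the nested expectations, in particular the outer one over $D_x$. The uniform bounds above handle it. At the endpoint $\lambda=0$ we interpret the derivative one-sidedly; alternatively, $r_\lambda$ is defined for $\lambda$ slightly negative, so the same dominated-convergence argument applies on an open interval containing $[0,\Lambda']$.
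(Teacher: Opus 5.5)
Your proposal is correct and follows essentially the same route as the paper: differentiate the per-prompt log-partition function, recognize the resulting ratio as an expectation under the Gibbs tilt $\pi^\star_\lambda$, and then average over prompts. The differences are minor. You obtain the second derivative directly as $Z_x''/Z_x-(Z_x'/Z_x)^2$ rather than citing the exponential-family identity. You also supply the dominated-convergence justification for the interchanges, and the one-line strong-convexity step, both of which the paper's proof leaves implicit.
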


Lemma~2 yields the main interpretation of the dual variable. Equation~\eqref{eq:dual-derivative} shows that the derivative is exactly the harmlessness budget minus the expected cost under the Gibbs-tilted policy. Hence any interior minimizer $\lambda^\star > 0$ satisfies
\[
\mathbb{E}_{x \sim D_x,\; y \sim \pi^\star_{\lambda^\star}(\cdot \mid x)}[c(x,y)] = \tau,
\]
so $\lambda^\star$ has the usual complementary-slackness interpretation as a shadow price for harmfulness. Equation~\eqref{eq:dual-hessian} further shows that $g$ is convex, so $g'$ is monotone nondecreasing on $[0,\infty)$; equivalently, the map
\[
\lambda \mapsto \mathbb{E}_{x \sim D_x,\; y \sim \pi^\star_\lambda(\cdot \mid x)}[c(x,y)]
\]
is monotone nonincreasing. This monotonicity is the key computational advantage of the dual reduction. On a compact search interval $I=[0,\Lambda]$, if $g'(0)\ge 0$, then the minimizer is the boundary point $\lambda^\star=0$; otherwise any interior minimizer is characterized by the scalar root condition $g'(\lambda^\star)=0$ and can therefore be found efficiently by binary search. If, in addition, $g''(\lambda)\ge \mu > 0$ on $I$, then $g$ is strongly convex on $I$, the minimizer is unique, and this one-dimensional search problem is especially well conditioned. Hence, we therefore estimate $\lambda^\star$ over a compact interval $I=[0,\Lambda]$, which turns calibration into a stable one-dimensional search problem. Specifically, we restrict the search to a compact interval because on the unbounded domain  $[0,\infty)$  the minimizer need not be attained at a finite  $\lambda$, and a uniform strong-convexity bound need not hold. The compact restriction localizes the problem to a well-conditioned one-dimensional search region.

Fix a compact interval $I := [0,\Lambda] \subset \mathbb{R}_+$ and let $\beta>0$.
For each prompt $x$, define
\begin{equation}\label{eq:thm_empirical_inner_Z_phi}
Z_x(\lambda)
:=
\mathbb{E}_{Y\sim \pi_{\mathrm{ref}}(\cdot\mid x)}
\exp\!\left(\frac{r(x,Y)-\lambda c(x,Y)}{\beta}\right),
\qquad
\phi(x,\lambda):=\beta \log Z_x(\lambda)+\lambda\tau,
\end{equation}
so that 
\begin{equation}\label{eq:thm_empirical_inner_g}
g(\lambda):=\mathbb{E}_{X\sim D_x}[\phi(X,\lambda)],
\qquad
\lambda^\star \in \arg\min_{\lambda\in I} g(\lambda).
\end{equation}

In practice, we calibrate $\lambda$ using a small calibration dataset of prompts and multiple samples from the reference policy for each prompt. For a candidate value of $\lambda$, we approximate the expectations in~\eqref{eq:thm_empirical_inner_Z_phi} and~\eqref{eq:thm_empirical_inner_g} by Monte Carlo, average the resulting empirical objective over calibration prompts, and then optimize over $\lambda \in [0,\Lambda]$. By Lemma~2, the corresponding expected cost is monotone in $\lambda$, so this calibration reduces to a stable one-dimensional search problem. %\aj{The bootstrapping thing is an implementation detail/trick rather than a part of the core method itself. We can mention it in the experiments section. Or we can metion it at the end of this subsection.}

% Suppose that $X_1,\dots,X_N$ are i.i.d.\ samples from $D_x$, and that for each
% $i\in\{1,\dots,N\}$, conditional on $X_i$, the variables
% \(
% Y_{i,1},\dots,Y_{i,K}\stackrel{\mathrm{i.i.d.}}{\sim}\pi_{\mathrm{ref}}(\cdot\mid X_i),
% \)
% with the tuples $(X_i,Y_{i,1},\dots,Y_{i,K})$ i.i.d.\ across $i$.
% Define
Now suppose $X_1,\dots,X_N \stackrel{\mathrm{i.i.d.}}{\sim} D_x$, and that for each
$i\in\{1,\dots,N\}$, conditional on $X_i$, the responses
$Y_{i,1},\dots,Y_{i,K} \stackrel{\mathrm{i.i.d.}}{\sim} \pi_{\mathrm{ref}}(\cdot\mid X_i)$, with the
tuples $(X_i,Y_{i,1},\dots,Y_{i,K})$ i.i.d. across $i$. We estimate the inner expectation by
\begin{equation}\label{eq:thm_empirical_inner_hatZ}
\widehat Z_{i,K}(\lambda)
:=
\frac{1}{K}\sum_{k=1}^K
\exp\!\left(\frac{r(X_i,Y_{i,k})-\lambda c(X_i,Y_{i,k})}{\beta}\right),
\end{equation}
and define the corresponding empirical inner loss and empirical dual objective as
\begin{equation}\label{eq:thm_empirical_inner_hatphi_hatg}
\widehat\phi_{i,K}(\lambda)
:=
\beta \log \widehat Z_{i,K}(\lambda)+\lambda\tau,
\qquad
\widehat g_{N,K}(\lambda)
:=
\frac{1}{N}\sum_{i=1}^N \widehat\phi_{i,K}(\lambda),
\end{equation}
An empirical calibration of the dual variable is then obtained by
\begin{equation}\label{eq:thm_empirical_inner_hatlambda}
\widehat\lambda_{N,K}\in \arg\min_{\lambda\in I}\widehat g_{N,K}(\lambda).
\end{equation}
Since $\hat g_{N,K}$ is convex in $\lambda$, its derivative is monotone nondecreasing on $[0,\Lambda]$. Therefore, when the minimizer is interior, $\hat\lambda_{N,K}$ can be computed efficiently by binary search for the root of $\hat g'_{N,K}(\lambda)=0$; otherwise, the solution lies at the boundary.

Once $\widehat\lambda_{N,K}$ has been computed, we define the calibrated augmented reward
\[
r_{\widehat\lambda_{N,K}}(x,y)
:=
r(x,y)-\widehat\lambda_{N,K}c(x,y),
\]
and pass this score to the chosen inference-time decoding procedure. In this sense, LARA calibrates a
scalar trade-off parameter rather than introducing a new decoding rule: the decoder remains unchanged, and only its reward signal is replaced by the safety-calibrated augmented reward. 
Algorithm~\ref{alg:lara} summarizes the procedure formally.
% \yc{Adding this here. This seems needed in this section. Additionally, since we use a small calibration dataset and there may be a distribution shift between calibration and inference data, we use bootstrapping to compute the distribution of $\lambda^*$ and select an upper quantile of the bootstrapped distribution as a conservative estimate, guarding against constraint violation at deployment. Since our optimization approach is binary search, bootstrapping is computationally cheap (10,000 runs in under 2 minutes).}

\aj{VERIFY}
Note that the dual objective in Theorem~\ref{thm:dual-closed-form} and the Gibbs policy \(\pi^\star_\lambda\) are defined at the level of complete responses, so the exact expected-cost interpretation of \(\lambda^\star\) applies directly to sequence-level samplers such as Best-of-\(N\) reranking. When the calibrated augmented reward is instantiated inside a token-level reward-guided decoder, the resulting procedure need not sample exactly from \(\pi^\star_\lambda\) over full responses. In that setting, \(\hat{\lambda}_{N,K}\) should be interpreted as a principled dual-calibrated trade-off parameter rather than as a quantity that exactly enforces the population expected-cost constraint.

% \yc{What if we rename Decoder to DecodeAlg to denote algorithm in Algorithm 1. Decoder can also mean architecture. Actually, this needs to be made clear everywhere in the paper, as i have seen decoder numerous times}\aj{done.}
\begin{algorithm}[t]
\caption{LARA: Inference-Time Safe Alignment via Dual Calibration}
\label{alg:lara}
\begin{algorithmic}[1]
\Require Reference model $\pi_{\mathrm{ref}}$, reward model $r_{\phi}$, cost model $c_{\psi}$, harmlessness budget $\tau$, KL coefficient $\beta$, calibration prompts $D_{\mathrm{cal}}=\{X_i\}_{i=1}^{N}$, search interval $I=[0,\Lambda]$, inference-time decoder algorithm $\mathsf{DecodeAlg}$
\State For each $i\in\{1,\dots,N\}$, sample $Y_{i,1},\dots,Y_{i,K}
   \stackrel{\mathrm{i.i.d.}}{\sim} \pi_{\mathrm{ref}}(\cdot\mid X_i)$ and define
   \[
   \widehat g_{N,K}(\lambda)
   =
   \frac{1}{N}\sum_{i=1}^N
   \left[
   \beta \log \left(
   \frac{1}{K}\sum_{k=1}^K
   \exp\!\left(
   \frac{r(X_i,Y_{i,k})-\lambda c(X_i,Y_{i,k})}{\beta}
   \right)
   \right)
   + \lambda\tau
   \right].
   \]
\State Compute
   \(
   \widehat\lambda_{N,K}\in\arg\min_{\lambda\in[0,\Lambda]}\widehat g_{N,K}(\lambda)
   \) using binary search. 
\State Define the calibrated augmented reward
   \(
   r_{\widehat\lambda_{N,K}}(x,y):=r(x,y)-\widehat\lambda_{N,K}c(x,y)
   \) and pass it to $\mathsf{DecodeAlg}$. 
% \For{test prompt $x$}
%     % \If{exact sampling from the Gibbs tilt is available}
%         \State Sample $y$ from the Gibbs tilt
%         \[
%         \pi^{\star}_{\widehat{\lambda}_{N,K}}(y\mid x)
%         \propto
%         \pi_{\mathrm{ref}}(y\mid x)
%         \exp\!\left(
%         \frac{r_{\widehat{\lambda}_{N,K}}(x,y)}{\beta}
%         \right).
%         \]
%     % \Else
%     %     \State Run the chosen inference-time decoder:
%     %     \[
%     %     y \gets \mathsf{Decode}\!\left(x,\pi_{\mathrm{ref}},r_{\widehat{\lambda}_{N,K}},\beta\right).
%     %     \]
%     % \EndIf
%     % \State Return $y$
% \EndFor
\end{algorithmic}
\end{algorithm}

% \aj{Make the following a bit more precise.}
% Algorithm~\ref{alg:lara} makes clear that LARA calibrates an \emph{objective}, not a new decoder. At the level of the dual analysis, the target policy is the Gibbs tilt $\pi_{\widehat{\lambda}_{N,K}}^{\star}$; in practice, this distribution is typically approximated by a chosen inference-time alignment method. Sequence-level methods such as Best-of-$N$ can use the calibrated score $r_{\widehat{\lambda}_{N,K}}(x,y)$ directly on complete responses. Token-level methods require a decoder/scorer pair that is compatible with prefix-time control, and our framework is orthogonal to that design choice.

\aj{VERIFY}
Accordingly, LARA should be viewed as a decoder-agnostic layer that supplies a principled helpfulness--harmlessness trade-off to an existing inference-time alignment method. Its safety interpretation is the expected-cost, soft-constraint interpretation inherited from Safe RLHF: it calibrates a shadow price for harmfulness and then decodes with the resulting augmented reward. %It does \emph{not} by itself provide a hard support-level exclusion guarantee over unsafe responses.

% \yc{
% We use the SMALL calibration dataset of prompts and generate multiple responses per prompt. Since the expectation in Equation 10 is monotone decreasing in lambda. We can use binary search to search for lambda that satisfies the stationary equation. Furthermore, this step also informs about the feasible region, As lambda goes to inf, the expected value in Eq 10, informs us of the lowest cost threshold we can achieve. Additionally, setting the lambda to 0, provides us a lower bound on costs that can be achieved by using only reward exponential tilting of the reference policy. 
% Additonally, since we use a small calibration dataset, and there is a possibility for inference data distribution shift from the calibration dataset, we use bootsptrapping to compute the distribution of lambda* and then use an upper quantile of the bootstrapped distribution to approximatie $\lambda^*$, Furthermore, since our optimization approiach is binary search, we can run multiple bootstrap runs in a few minutes (10k runs in 2 minutes).}
\aj{VERIFY:} In practice, calibration may be sensitive to mismatch between the calibration prompts and the deployment distribution. As a robustness heuristic, one may therefore bootstrap the calibration set to obtain a distribution over \(\hat{\lambda}_{N,K}\) and deploy an upper quantile of this distribution as a conservative choice of the trade-off parameter. We do not analyze this procedure theoretically here, so we view it as a practical robustness heuristic rather than part of the core guarantee.

\subsection{Theoretical Results}
\label{sec:method_estimation}

We now turn to the finite-sample behavior of the empirical calibration procedure defined in Section~\ref{sec:model}.  The following theorem shows that, under boundedness and strong convexity assumptions, the estimator \(\hat{\lambda}_{N,K}\) concentrates around the population minimizer \(\lambda^\star\), with error controlled by the outer-sample term in \(N\) and the inner Monte Carlo term in \(K\). %The following theorem shows that, under standard boundedness and strong convexity assumptions, the estimator $\widehat\lambda_{N,K}$ converges to the population minimizer $\lambda^\star$ at a rate of $O(N^{-1/2}+K^{-1/2})$.
\begin{theorem}[Finite-sample guarantee]
\label{thm:finite-sample-empirical}
% Let \(C,R,\Lambda,\mu\) be as above, and define
Assume that $|r_(x,y)|\le R$ and $ |c_(x,y)|\le C$ for all $(x,y)$, that
$g$ has a unique minimizer $\lambda^\star\in(0,\Lambda)$, and that $g$ is
$\mu$-strongly convex on $I=[0,\Lambda]$, i.e., $g''(\lambda)\ge \mu$ for all
$\lambda\in I$. Define
\begin{equation}\label{eq:thm_empirical_inner_ab}
a:=\exp\!\left(-\frac{R+\Lambda C}{\beta}\right),
\qquad
b:=\exp\!\left(\frac{R+\Lambda C}{\beta}\right),
\end{equation}
\begin{equation}\label{eq:thm_empirical_inner_BK}
B_K
:=
\frac{Cb}{a\sqrt{K}}
+
\frac{Cb^2}{2a^2\sqrt{K}},
\end{equation}
\begin{equation}\label{eq:thm_empirical_inner_epsilon}
\varepsilon_{N,K}(\delta)
:=
2C\sqrt{\frac{2\log(2(N+1)/\delta)}{N}}
+
\frac{C^2\Lambda}{\beta N}
+
B_K,
\end{equation}
and
\begin{equation}\label{eq:thm_empirical_inner_m}
m:=\min\{\lambda^\star,\Lambda-\lambda^\star\}.
\end{equation}
If $\varepsilon_{N,K}(\delta)<\mu m$, then
\begin{equation}\label{eq:thm_empirical_inner_conclusion}
\mathbb{P}\!\left(
\bigl|\widehat\lambda_{N,K}-\lambda^\star\bigr|
\le
\frac{\varepsilon_{N,K}(\delta)}{\mu}
\right)
\ge 1-\delta.
\end{equation}
\end{theorem}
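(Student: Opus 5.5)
The argument is a standard M-estimation step for a one-dimensional strongly convex objective. The main task is a uniform bound on the derivative gap $\sup_{\lambda\in I}|\widehat g_{N,K}'(\lambda)-g'(\lambda)|\le\varepsilon_{N,K}(\delta)$ with probability at least $1-\delta$. Strong convexity then converts this bound into a bound on $|\widehat\lambda_{N,K}-\lambda^\star|$.

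\textbf{Deterministic step.} Suppose the uniform derivative bound holds. By Lemma~\ref{lem:dual-derivatives}, $g'(\lambda^\star)=0$ and $g''\ge\mu$ on $I$. Hence for $\lambda=\lambda^\star\pm t$ with $t\in(\varepsilon_{N,K}/\mu,\,m]$ we get $|g'(\lambda)|\ge\mu t>\varepsilon_{N,K}$, which forces $\widehat g_{N,K}'(\lambda)$ to have the same sign as $g'(\lambda)$. The condition $\varepsilon_{N,K}<\mu m$ guarantees that both points $\lambda^\star\pm\varepsilon_{N,K}/\mu$ lie inside $I$. So $\widehat g_{N,K}'$ is negative at the left point and positive at the right point. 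Since $\widehat g_{N,K}$ is convex (a log-sum-exp of affine functions), every minimizer over $I$ lies in $[\lambda^\star-\varepsilon_{N,K}/\mu,\lambda^\star+\varepsilon_{N,K}/\mu]$, which is the conclusion \eqref{eq:thm_empirical_inner_conclusion}.

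\textbf{Splitting the error.} Write $h(x,\lambda):=\partial_\lambda\phi(x,\lambda)=\tau-\mathbb{E}_{\pi^\star_\lambda(\cdot\mid x)}c$ and let $\widehat h_{i,K}$ be its plug-in analogue. Decompose
\[
\widehat g'_{N,K}-g'=\Bigl(\tfrac1N\textstyle\sum_i h(X_i,\cdot)-g'\Bigr)+\tfrac1N\textstyle\sum_i\bigl(\widehat h_{i,K}-h(X_i,\cdot)\bigr).
\]

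\textbf{Outer term.} For fixed $\lambda$, $h(X_i,\lambda)\in[\tau-C,\tau+C]$, so Hoeffding gives a deviation of order $2C\sqrt{2\log(2/\delta')/N}$. To make this uniform, place a grid of $N+1$ points on $[0,\Lambda]$ with spacing $\Lambda/N$ and take a union bound; this explains the $\log(2(N+1)/\delta)$ factor. Between grid points, use the Lipschitz bound $|\partial_\lambda h|=\mathrm{Var}/\beta\le C^2/\beta$ from \eqref{eq:dual-hessian}. This applies both to each $h(X_i,\cdot)$ and to $g'$, and it produces the $C^2\Lambda/(\beta N)$ term.

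\textbf{Inner term (the main obstacle).} This bias/variance term is the delicate part, because $\widehat h_{i,K}=\tau-\widehat A/\widehat Z$ is a ratio estimator. Here $\widehat A=\frac1K\sum c\,e^{r_\lambda/\beta}$ and $\widehat Z=\frac1K\sum e^{r_\lambda/\beta}$, with exponentials lying in $[a,b]$ on $I$. Expand the ratio around the true means as
\[
\frac{\widehat A}{\widehat Z}-\frac AZ=\frac{\widehat A-A}{Z}-\frac{A(\widehat Z-Z)}{Z^2}+\text{remainder}.
\]
Control each piece in $L^1$ through $L^2$ bounds: $\mathbb{E}|\widehat A-A|\le Cb/\sqrt K$ and $\mathbb{E}|\widehat Z-Z|\le b/\sqrt K$, with $Z,\widehat Z\ge a$. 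This yields terms of order $Cb/(a\sqrt K)$ and $Cb^2/(a^2\sqrt K)$, matching $B_K$.

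To turn this into a deterministic, uniform-in-$\lambda$ statement, I would first try an almost-sure bound $|\widehat A/\widehat Z-A/Z|\le|\widehat A-A|/\widehat Z+|A/Z|\,|\widehat Z-Z|/\widehat Z$. If that does not suffice, I would accept that this part of the argument really bounds the bias $|\mathbb{E}\widehat h-h|$ and fold the fluctuation into the Hoeffding step, since $\widehat h_{i,K}\in[\tau-C,\tau+C]$ is still bounded and independent across $i$. In that version the outer concentration is applied directly to $\frac1N\sum_i\widehat h_{i,K}$ around its mean, and $B_K$ bounds $\sup_\lambda|\mathbb{E}\widehat h_{1,K}-g'|$. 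The grid Lipschitz argument still goes through, since $\widehat h$ is also $C^2/\beta$-Lipschitz (it is an empirical tilted variance). Summing the three contributions gives $\varepsilon_{N,K}(\delta)$.
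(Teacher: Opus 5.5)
Your fallback is the paper's proof. The paper centers $\widehat g_{N,K}'$ at $g_K'(\lambda)=\mathbb{E}[\widehat\phi_K'(X,Y_{1:K},\lambda)]$, which is your $\mathbb{E}\,\widehat h_{1,K}$. It then applies Hoeffding with a union bound on an $(N+1)$-point grid and interpolates with the $C^2/\beta$ Lipschitz bound. Next it bounds the deterministic bias $\sup_{\lambda\in I}|g_K'(\lambda)-g'(\lambda)|\le B_K$ through the ratio estimator. Finally it localizes by strong convexity exactly as in your deterministic step.

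You should commit to that version, because your first decomposition does not yield the theorem. There, $\frac1N\sum_i(\widehat h_{i,K}-h(X_i,\cdot))$ is a random term that $B_K$ controls only in expectation. The almost-sure inequality you propose has a random right-hand side that is $O(K^{-1/2})$ only after taking expectations, so it gives no deterministic bound. A high-probability, uniform-in-$\lambda$ bound on that term would need a second concentration and grid argument. That brings extra $\sqrt{\log(N/\delta)/N}$ and Lipschitz terms and a split failure probability, not the single-$\delta$, three-term $\varepsilon_{N,K}(\delta)$. In the fallback, the bias bound is a statement about expectations, so the whole probability budget goes to the grid event. For the interpolation, either compare with the nearest grid point, which gives your $C^2\Lambda/(\beta N)$, or use monotonicity of $\widehat g_{N,K}'$ on one side as the paper does. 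Comparing with the left endpoint using Lipschitz bounds on both functions would double that term.

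One constant needs fixing to land exactly on $B_K$. Use $\mathbb{E}|\widehat Z-Z|\le\sqrt{\operatorname{Var}(V)/K}\le b/(2\sqrt K)$, since a variable in $[0,b]$ has variance at most $b^2/4$. With your $b/\sqrt K$, the bounds $|A|\le Cb$ and $Z\widehat Z\ge a^2$ give a second term of $Cb^2/(a^2\sqrt K)$, twice what $B_K$ allows.

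Also drop the expansion with a remainder and use the exact identity $\widehat A/\widehat Z-A/Z=(\widehat A-A)/\widehat Z-A(\widehat Z-Z)/(\widehat Z Z)$, as the paper does. Your variant with $|A/Z|\le C$ and $\widehat Z\ge a$ is slightly sharper, and it also implies $B_K$ once the $b/(2\sqrt K)$ bound is used.
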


Theorem~\ref{thm:finite-sample-empirical} makes the calibration error transparent. The first term in
$\varepsilon_{N,K}(\delta)$ is the concentration error from using $N$ prompts, the second is a
one-dimensional discretization term arising in the uniform derivative bound, and the third is
the $O(K^{-1/2})$ Monte Carlo error from approximating the inner expectation. Thus, for
fixed $K$, the estimator is accurate up to the residual inner-expectation error, while letting
both $N$ and $K$ grow drives $\widehat\lambda_{N,K}$ toward the population minimizer
$\lambda^\star$.
See Appendix~\ref{app:duality-conc} for the proof. 

\section{Experiments}

\yc{Experiments to Do before arxiving/rebuttals

\begin{enumerate}
    \item Study the effect of number of calibration responses per prompt $K$. and the responses $N$ in $BoN$. Pick a K, compute $\lambda*$ and evaluate if this lies at the threshold boundary for different $N$ values in BoN. Repeat this for multiple values of K.
    \item KL Divergence info is lacking. Add plots showing the KL divergence on the x axis and the Helpfulness/Harmlessness on the y-axise (2 plots per model) for all the approaches considered. Also add multiple N values for BoN in the plot.
\end{enumerate}
}

We investigate the following two research questions:
\begin{enumerate}
\item For Best-of-$N$ sampling, does LARA identify a principled $\lambda^*$ such that any smaller value violates the safety constraint and any larger value satisfies it, at unnecessary cost to helpfulness?
\item How does LARA-driven inference-time alignment compare to (i) finetuning-based baselines and (ii) inference-time baselines that either forgo safety penalties ($\lambda=0$) or over-penalize harmlessness ($\lambda > \lambda^*$)?
\end{enumerate}

We evaluate our research questions on two models: Llama-3.2-3B \citep{grattafiori2024llama3herdmodels} and Qwen-2.5-3B \citep{qwen2025qwen25technicalreport}. We follow the standard RLHF pipeline. The models are first finetuned on the Alpaca dataset \citep{alpaca} to obtain the SFT policy, which serves as the reference policy for all methods. The reward model and cost model are trained using the standard Bradley-Terry preference modeling objective on the BeaverTails preference dataset \citep{ji2023beavertails}, using helpfulness and harmfulness preference labels respectively. We note that this protocol follows that of Safe-RLHF \citep{dai2023safe} and SafeDPO \citep{kim2025safedpo}, allowing for a principled comparison.
For additional implementation details and hyperparameters, refer to Appendix~\ref{ap:implementation-details}.

\subsection{Constraint Satisfaction and Optimality of $\lambda^*$}

We evaluate RQ1 using Best-of-$N$ sampling on the BeaverTails test set \citep{ji2023beavertails}, which was also used for evaluations in Safe-RLHF \citep{dai2023safe} and Safe-DPO \citep{kim2025safedpo}, for which LARA is theoretically grounded. Figure~\ref{fig:bon_constraint} reports the expected helpfulness $\mathbb{E}[r]$ and expected cost (Harmfulness) 
$\mathbb{E}[c]$ of the BoN policy, which selects responses according to $r - \lambda c$, 
as a function of $\lambda$, for both Llama and Qwen. We observe that for $\lambda < 
\lambda^*$, the cost constraint $\mathbb{E}[c] \leq \tau$ is violated, while for $\lambda 
> \lambda^*$, the constraint is satisfied but at an unnecessary reduction in expected 
helpfulness. LARA's calibrated $\lambda^*$ sits at the boundary, satisfying the constraint 
while maximizing helpfulness, consistent with the theoretical optimality conditions derived 
in Section~\ref{sec:model}.

\begin{figure}[t]
    \centering
    \begin{subfigure}[b]{0.48\textwidth}
        \centering
        \includegraphics[width=\textwidth]{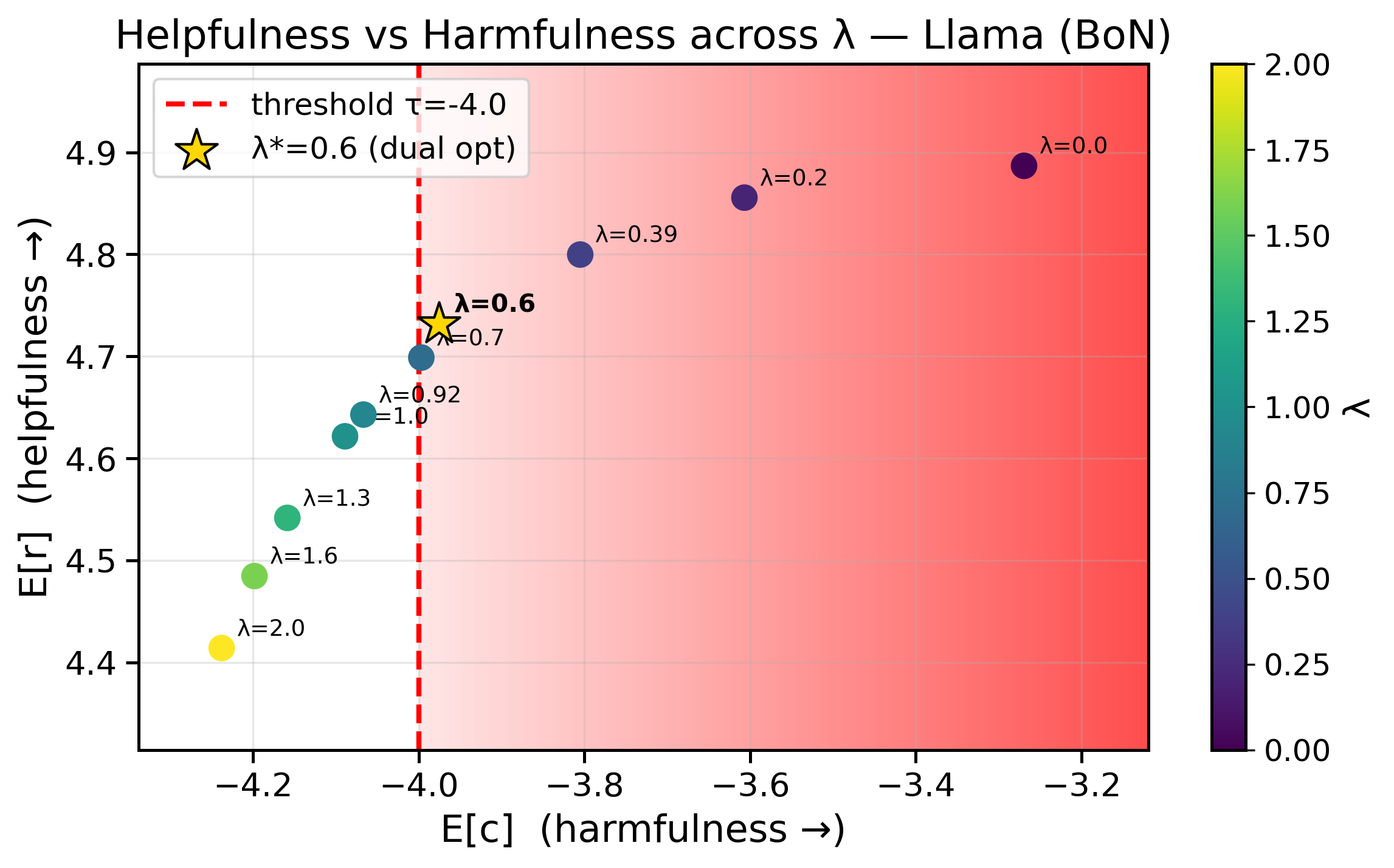}
        % \caption{Llama-3.2-3B}
        \label{fig:bon_llama}
    \end{subfigure}
    \hfill
    \begin{subfigure}[b]{0.48\textwidth}
        \centering
        \includegraphics[width=\textwidth]{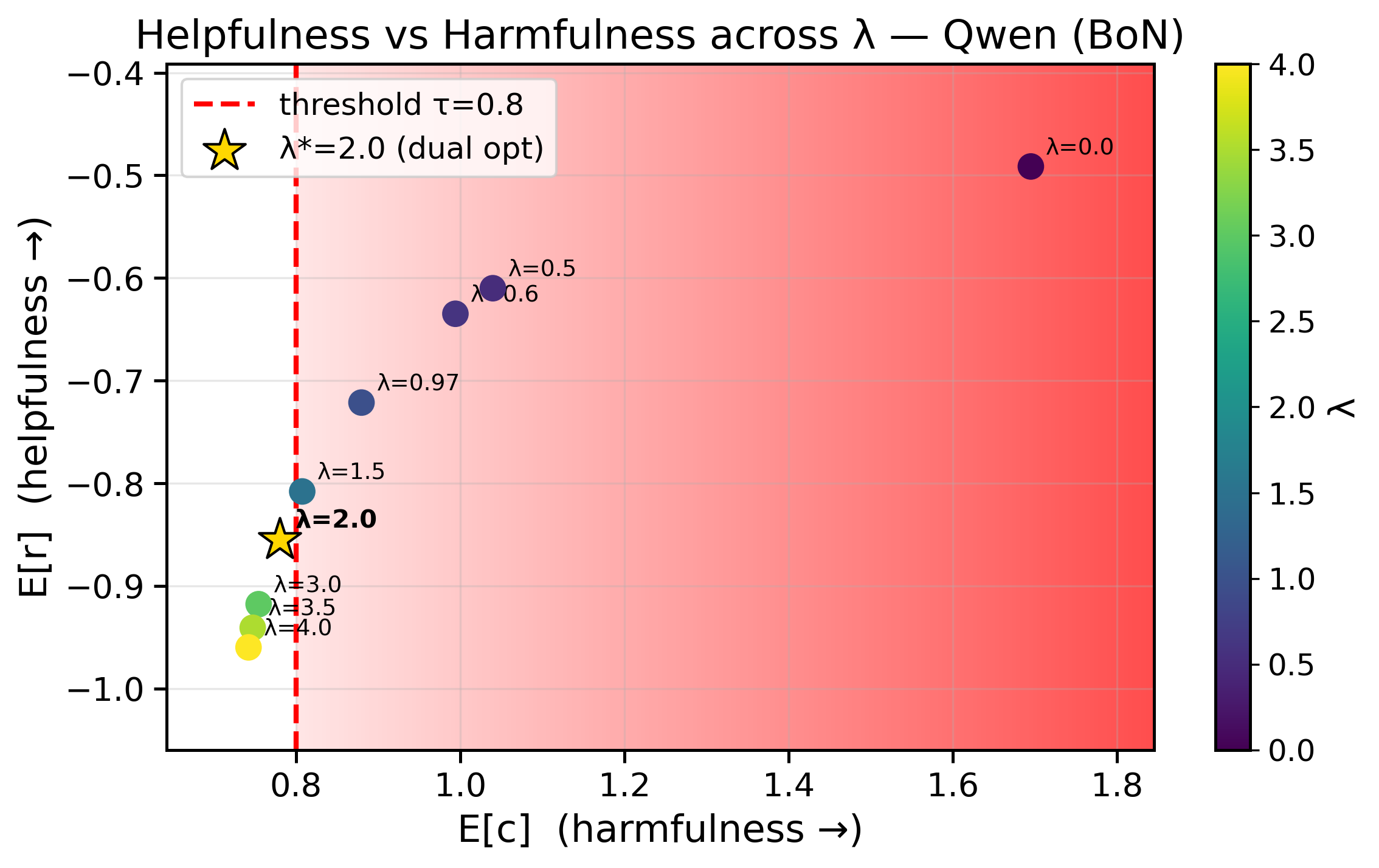}
        % \caption{Qwen-2.5-3B}
        \label{fig:bon_qwen}
    \end{subfigure}
    \caption{Expected helpfulness $\mathbb{E}[r]$ and expected cost $\mathbb{E}[c]$ of the BoN policy as a function of $\lambda$, for the Llama base model on the left and Qwen base model on the right. The dashed vertical line indicates the expected cost threshold $\tau$.}
    \label{fig:bon_constraint}
\end{figure}

\subsection{Comparison Against Finetuning and Inference-Time Baselines}

We evaluate RQ2 by comparing LARA-driven inference-time alignment against 
finetuning-based baselines (Safe-RLHF, SafeDPO, SFT) and inference-time 
baselines at $\lambda=0$ (no safety penalty) and $\lambda > \lambda^*$ 
(over-penalized). We perform both model-based evaluation and evaluation with GPT as the judge \citep{zheng2023judgingllmasajudgemtbenchchatbot}. We employ the same evaluation protocol used in \citet{kim2025safedpo}. We provide results for Llama here and push those for Qwen to Appendix ~\ref{ap:additional-results}.

\paragraph{Model Evaluation}
We use the beaver-7b-unified-reward model\footnote{\url{https://huggingface.co/PKU-Alignment/beaver-7b-unified-reward}} 
to assess helpfulness and the beaver-7b-unified-cost model\footnote{\url{https://huggingface.co/PKU-Alignment/beaver-7b-unified-cost}} 
to assess harmlessness, over a set of 256 test prompts from the BeaverTails \citep{ji2023beavertails} test set. 

Looking at model-based evaluation for Llama in Figure~\ref{fig:model_eval_llama}, BoN 
with LARA's calibrated $\lambda^*$ achieves the best helpfulness-harmlessness tradeoff 
among inference-time methods, outperforming SafeDPO on helpfulness while remaining 
competitive on harmlessness, despite requiring no weight updates. At $\lambda=0$, BoN 
produces highly harmful responses, while $\lambda > \lambda^*$ over-penalizes helpfulness 
with marginal harmlessness gain, validating the optimality of LARA's calibrated $\lambda^*$. 
All RGTG methods perform particularly poorly on helpfulness, falling below the SFT model, 
though they achieve higher harmlessness. Among RGTG methods, CD-Q is the most harmless, 
with harmlessness monotonically increasing with $\lambda$. Among all methods considered, 
Safe-RLHF achieves the best helpfulness-harmlessness tradeoff.

\begin{figure}[t]
    \centering
    \begin{subfigure}[b]{0.48\textwidth}
        \centering
        \includegraphics[width=\textwidth]{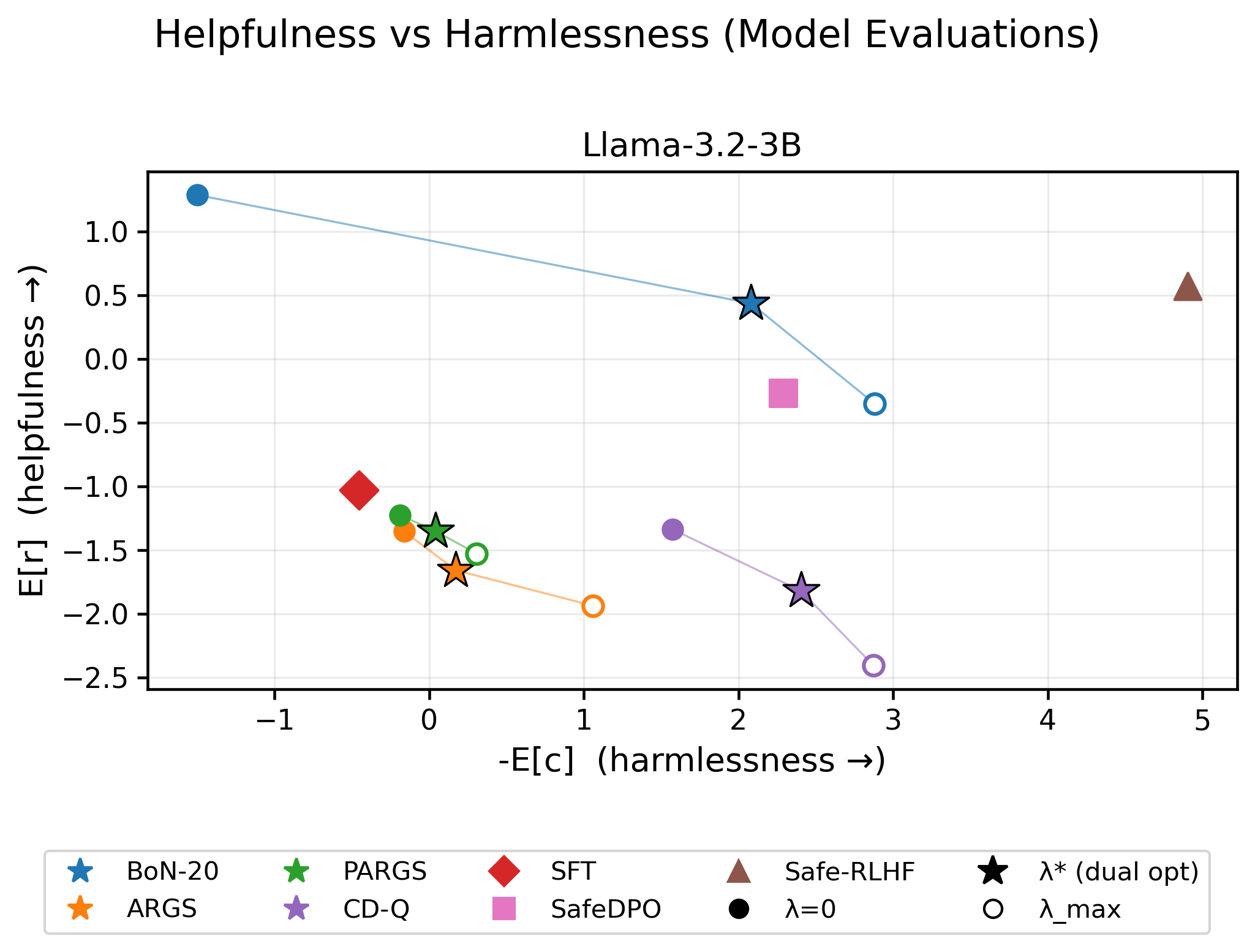}
        \caption{Model Evaluation}
        \label{fig:model_eval_llama}
    \end{subfigure}
    \hfill
    \begin{subfigure}[b]{0.48\textwidth}
        \centering
        \includegraphics[width=\textwidth]{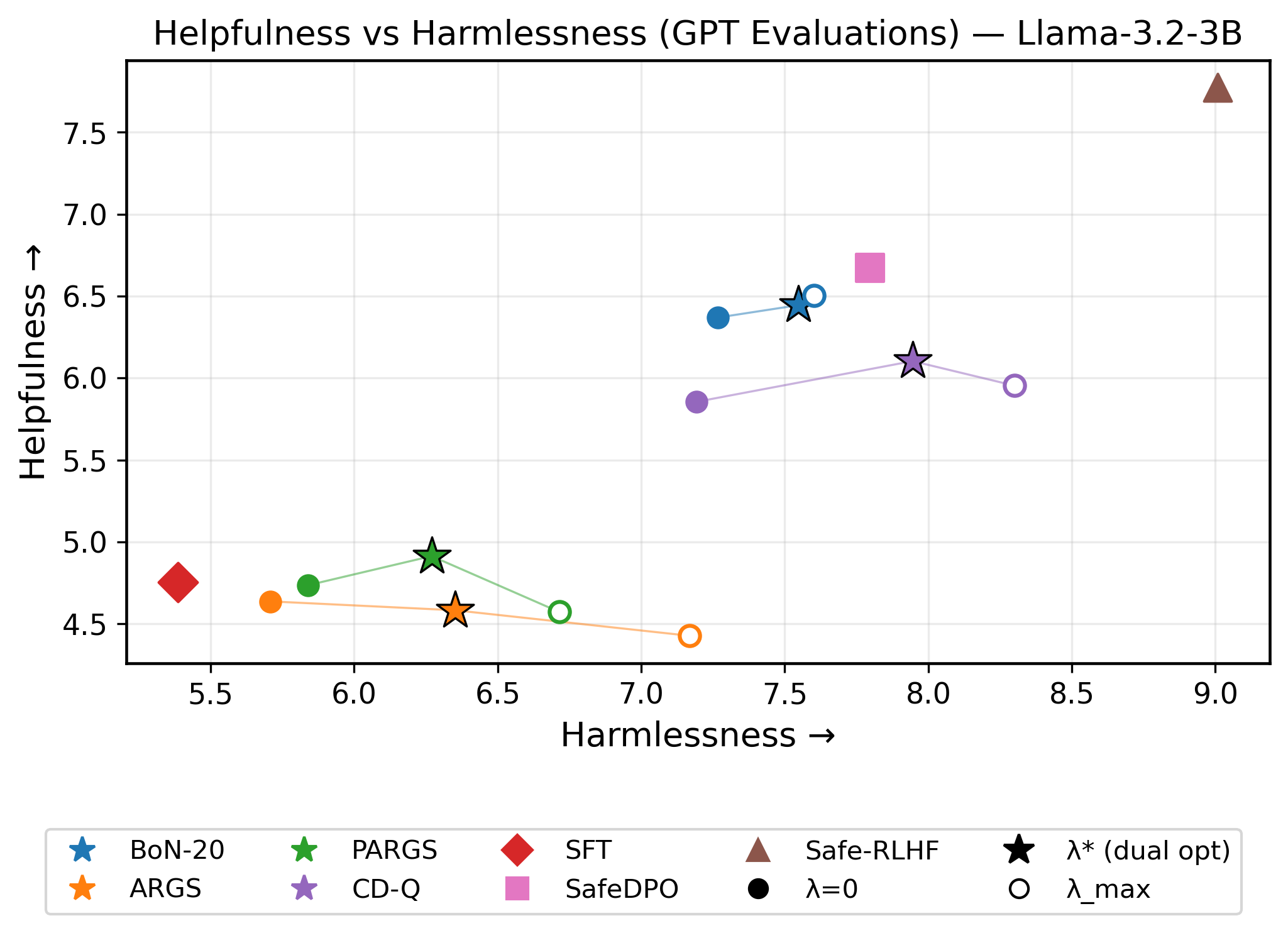}
        \caption{GPT Evaluation}
        \label{fig:gpt_eval_llama}
    \end{subfigure}
    \caption{Helpfulness vs Harmfullness for LLama3.2-3b under both model evaluation (left) and GPT evaluation (right). $\lambda^*$ for Llama is $0.6$ and $\lambda_{max}$ is chosen to be $2.0$.}
    \label{fig:eval_llama}
\end{figure}

\paragraph{GPT Evaluation} We used GPT-4o-mini as a judge\citep{zheng2023judgingllmasajudgemtbenchchatbot}, end evaluated the helpfulness and harmlessness of the model generations, over the same set of 256 test prompts from the BeaverTails \citep{ji2023beavertails} test set, which were also used in model evaluations. We used the same evaluation prompts used by \citet{kim2025safedpo}, where GPT is asked to rate the helpfulness and harmlessness of model responses on a scale from 1-10.

Under GPT-based evaluation for Llama in Figure~\ref{fig:gpt_eval_llama}, the rankings 
are broadly consistent with model-based evaluation, with Safe-RLHF achieving the best 
overall tradeoff and finetuning baselines outperforming inference-time methods. BoN achieves comparable performance to SafeDPO. However, 
BoN shows minimal sensitivity to $\lambda$ under GPT evaluation, with helpfulness and 
harmlessness remaining largely stable across $\lambda=0$, $\lambda^*$, and $\lambda > 
\lambda^*$, suggesting that the beaver reward and cost models are more sensitive to 
$\lambda$ than GPT's judgment. According to GPT evaluation, CD-Q performs much better with harmlessness improving consistently with $\lambda$, and emerges as the strongest 
RGTG method. ARGS and PARGS continue to perform poorly on helpfulness, though harmlessness 
does improve with $\lambda$ for both methods.
\section{Conclusion}
\label{sec:conclusion}

In this work, we introduced \emph{Lagrangian Reward Augmentation} (LARA), a decoder-agnostic framework for safe inference-time alignment. By dualizing a KL-regularized constrained objective, LARA reduces the reward--cost trade-off to a one-dimensional calibration problem whose solution defines a safety-calibrated augmented reward. This yields a principled alternative to manual penalty tuning and can be integrated directly into existing inference-time alignment methods. For sequence-level samplers, the resulting dual variable has an exact expected-cost interpretation, while for token-level reward-guided decoders it provides a principled heuristic induced by the same constrained objective. Overall, LARA shows how safety constraints can be transferred from training-time alignment into practical decoding-time control without updating model weights. Experimentally, we find that LARA improves the helpfulness-harmlessness tradeoff, with Best-of-$N$ achieving the best performance among inference-time methods, approaching finetuning-based direct alignment baselines.
\aj{Add a line about experiments.}
% In this work, we proposed \emph{Lagrangian Reward Augmentation} (LARA), a principled framework for bringing safety constraints into inference-time alignment. Our key observation is that dualizing a KL-regularized constrained objective yields a one-dimensional calibration problem, whose solution defines a safety-aware augmented reward that can be used within existing decoding algorithms. This replaces manual reward--penalty tuning with a constrained optimization perspective while preserving the flexibility of frozen-model decoding. For sequence-level methods, LARA inherits an exact expected-cost interpretation; for token-level reward-guided methods, it provides a principled dual-calibrated heuristic. More broadly, our results suggest that constrained alignment objectives need not be enforced only through retraining, but can also guide practical and modular control at decoding time.

\section*{Acknowledgements}

This work has taken place in the Safe, Correct, and Aligned Learning and Robotics Lab (SCALAR) at The University of Massachusetts Amherst. SCALAR research is supported in part by the NSF (IIS-2437426) and Open Philanthropy. The computational resources for this work were provided by the University of Massachusetts Amherst's partnership with the Unity Research Computing Platform, a multi-institutional cluster led by the University of Massachusetts and the University of Rhode Island.

Scott Niekum holds concurrent appointments as an Associate Professor at the University of Massachusetts Amherst and as an Amazon Scholar. This paper describes work performed at the University of Massachusetts Amherst and is not associated with Amazon.

\bibliography{colm2026_conference}
\bibliographystyle{colm2026_conference}

% \clearpage

% Appendix title 
% \begin{center}
%     \Large\textbf{Appendix}
% \end{center}

\appendix
\section*{Large Language Model Usage}

Large Language Models (LLMs) were used for grammatical editing and improving writing flow. Furthermore, LLMs were used by the authors to implement baselines, whose code was not available online, e.g., CD-Q \citep{mudgal2023controlled}, and also to create plots. The implementations were reviewed and validated by the authors. All research methodology, experimental design, data analysis, and scientific conclusions are entirely the work of the human authors.
% \section{Appendix}
% You may include other additional sections here.
\section{Related Work}
\label{sec:related_work}
Reinforcement Learning from Human Feedback (RLHF) is 
% the dom91 inant paradigm for aligning language models to human preferences: its basic recipe is to collect preference comparisons, fit a reward model from those comparisons, and then optimize the model toward high-reward behavior, 
an idea that traces back to preference-based learning~\citep{Christiano2017DeepRL} in reinforcement learning and was brought into the LLM setting by InstructGPT~\citep{ouyang2022training}. %%%
In recent alignment work, this pipeline has been complemented by direct preference-optimization methods such as DPO~\citep{rafailov2023direct}, which exploit the closed-form solution of a KL-regularized reward-maximization problem to avoid performing RL.
Together, these works establish the modern alignment landscape in which RLHF-style reward-based fine-tuning remains the canonical baseline, while direct alignment methods offer cheaper but still weight-updating alternatives.

Safety has emerged as a central concern within this alignment pipeline because maximizing user preference alone does not reliably prevent toxic, misleading, or otherwise harmful generations. 
Empirical studies such as RealToxicityPrompts~\citep{gehman2020realtoxicityprompts} document toxic degeneration in neural language models, red-teaming work~\citep{ganguli2022red} systematically catalogs harmful failure modes and their scaling behavior, and broader surveys of language-model harms~\citep{weidinger2021ethical} emphasize that safety risks are social as well as technical. 
% This concern is especially acute in high-stakes domains such as medicine, legal reasoning, and education, where recent work has explored both the promise and the risks of deploying LLMs.
On the data side, BeaverTails~\citep{ji2023beavertails} provides a large-scale safety-alignment dataset with explicit harmfulness annotations, enabling the separation of usefulness and safety signals. 
Building on this perspective, Safe RLHF~\citep{dai2023safe} formalizes safety alignment by learning separate reward and cost models and solving a constrained optimization problem that maximizes helpfulness subject to a harmlessness budget, while SafeDPO~\citep{kim2025safedpo} develops a direct-preference alternative that incorporates safety without reverting to a full RL loop. 
The recent Safe RLHF Beyond Expectation paper~\citep{chittepu2026safe} sharpens the limitation of this expectation-based view: controlling only expected cost can miss rare but severe failures, motivating richer risk-sensitive safety criteria beyond a single expectation statistic.

% Best-of-N reranking~\citep{nakano2022webgptbrowserassistedquestionansweringhuman, stiennon2022learningsummarizehumanfeedback, beirami2024theoretical} samples several complete responses and selects the one with the best reward score, giving a simple sequence-level test-time alignment mechanism with theoretical guarantees. Alternative methods, which involve token-level reward-guided sampling, were also proposed. These Reward Guided Text Generation (RGTG) \citep{khanov2024args} methods use a scoring function, such as the reward model, to modify the token likelihoods. 

% \yc{NEW: Lets keep this para?}\yc{SAME AS THE INFERENCE-TIME ALIGNMENT SECTION IN BACKGROUND} \aj{We can delete this paragraph since its a good fit in the background section. OR we can keep it here and only include the equations in the background section. Either way, we'll need to move the related work section to the appendix because of space.}
% Inference-time alignment seeks to avoid repeated weight updates altogether by steering a frozen base model during decoding. 
Inference-time alignment aims to satisfy alignment objectives without additional fine-tuning by controlling a frozen model at decode time. 
% Existing approaches are commonly divided into sequence-level selection and token-level guidance, balancing alignment quality, compute cost, and controllability. 
This setting is attractive when rapid deployment, low retraining overhead, or policy modularity is required.
Best-of-N reranking~\citep{nakano2022webgptbrowserassistedquestionansweringhuman, stiennon2022learningsummarizehumanfeedback,beirami2024theoretical} samples several complete responses and selects the one with the best reward score, giving a simple sequence-level test-time alignment mechanism with theoretical guarantees. 
ARGS~\citep{khanov2024args} instead performs reward-guided search directly during generation by modifying token selection using a reward signal at decoding time. Controlled Decoding~\citep{mudgal2023controlled} learns prefix scorers that bias next-token probabilities toward desired attributes. Reward-Augmented Decoding~\citep{deng-raffel-2023-reward} likewise injects reward information into token-level generation, but is designed around a unidirectional reward model to make controlled decoding more efficient. FUDGE~\citep{yang2021fudge} is an earlier controlled-generation approach that uses future discriminators to steer generation toward target attributes without retraining the base LM. \cite{rashid2024critical} provide an important critique of tokenwise reward guidance, showing that full-sequence reward models are generally mismatched to prefix-level scoring and motivating partial-sequence training; this line of work underlies PARGS. 
FarMA~\citep{rashid2025towards} then addresses the computational side of the same problem by reducing reward-model calls and making reward-guided decoding substantially cheaper at test time. FarMA also uses the optimal completion score of partial sequences to guide decoding, unlike earlier methods. Our work is orthogonal to these decoder designs: rather than proposing yet another search rule, it derives a calibrated helpfulness-versus-harmfulness objective from the dual of constrained Safe RLHF and uses that calibrated score as a drop-in signal for existing inference-time methods, thereby replacing manual scalar trade-off tuning with a principled safety-constrained construction.

% \section{Duality for constrained KL-regularized optimization}
\section{Proof of Theorem~\ref{thm:dual-closed-form}}
\label{sec:appendix-duality}

\begin{proof}

For notational simplicity, we present the argument in the countable-response case; the general measurable-space version follows from the same Gibbs variational identity.

Fix $\lambda\ge 0$. Starting from \eqref{eq:lagrangian},
\[
\mathcal{L}(\pi,\lambda)
=
\mathbb{E}_{x\sim D_x}\!\left[
\mathbb{E}_{y\sim \pi(\cdot\mid x)} r_\lambda(x,y)
-\beta\,D_{\mathrm{KL}}\!\bigl(\pi(\cdot\mid x)\,\|\,\pi_{\mathrm{ref}}(\cdot\mid x)\bigr)
\right]
+\lambda\tau.
\]
Therefore
\begin{equation}
\label{eq:dual-start}
g(\lambda)
=
\sup_{\pi}\mathcal{L}(\pi,\lambda)
=
\lambda\tau
+
\sup_{\pi}
\mathbb{E}_{x\sim D_x}\!\left[
\mathbb{E}_{y\sim \pi(\cdot\mid x)} r_\lambda(x,y)
-\beta\,D_{\mathrm{KL}}\!\bigl(\pi(\cdot\mid x)\,\|\,\pi_{\mathrm{ref}}(\cdot\mid x)\bigr)
\right].
\end{equation}
Since the optimization over $\pi(\cdot\mid x)$ is pointwise in $x$, the supremum decouples across prompts:
\begin{equation}
\label{eq:dual-decouple}
g(\lambda)
=
\lambda\tau
+
\mathbb{E}_{x\sim D_x}
\left[
\sup_{q\ll \pi_{\mathrm{ref}}(\cdot\mid x)}
\left\{
\mathbb{E}_{y\sim q} r_\lambda(x,y)
-\beta\,D_{\mathrm{KL}}\!\bigl(q\,\|\,\pi_{\mathrm{ref}}(\cdot\mid x)\bigr)
\right\}
\right].
\end{equation}
Fix a prompt $x$, and abbreviate
\[
p_x(y):=\pi_{\mathrm{ref}}(y\mid x).
\]
For any candidate distribution $q$ on responses,
\begin{equation}
\label{eq:Fx-def}
F_x(q)
:=
\mathbb{E}_{y\sim q} r_\lambda(x,y)
-\beta\,D_{\mathrm{KL}}(q\|p_x)
=
\sum_y q(y)\,r_\lambda(x,y)
-\beta\sum_y q(y)\log\frac{q(y)}{p_x(y)}.
\end{equation}

We now solve the maximization of $F_x(q)$.

\smallskip
\noindent\textbf{First-order characterization of the optimizer.}
Introduce a multiplier $\eta$ for the normalization constraint $\sum_y q(y)=1$, and consider
\[
\mathcal{J}(q,\eta)
=
\sum_y q(y)\,r_\lambda(x,y)
-\beta\sum_y q(y)\log\frac{q(y)}{p_x(y)}
+\eta\left(\sum_y q(y)-1\right).
\]
Differentiate with respect to $q(y)$:
\[
\frac{\partial \mathcal{J}}{\partial q(y)}
=
r_\lambda(x,y)
-\beta\left(\log\frac{q(y)}{p_x(y)}+1\right)
+\eta.
\]
Setting the derivative equal to zero gives
\[
r_\lambda(x,y)
-\beta\left(\log\frac{q(y)}{p_x(y)}+1\right)
+\eta
=0,
\]
hence
\[
\log\frac{q(y)}{p_x(y)}
=
\frac{r_\lambda(x,y)+\eta-\beta}{\beta}.
\]
Exponentiating both sides yields
\[
q(y)
=
p_x(y)\exp\!\left(\frac{r_\lambda(x,y)+\eta-\beta}{\beta}\right).
\]
Thus $q(y)$ is proportional to $p_x(y)\exp(r_\lambda(x,y)/\beta)$, i.e.
\[
q(y)\propto p_x(y)\exp\!\left(\frac{r_\lambda(x,y)}{\beta}\right).
\]
Normalizing shows that the optimizer must be
\begin{equation}
\label{eq:q-star-x}
\pi_\lambda^*(y\mid x)
=
\frac{
p_x(y)\exp(r_\lambda(x,y)/\beta)
}{
Z_x(\lambda)
},
\end{equation}
where
\begin{equation}
\label{eq:partition-x}
Z_x(\lambda)
:=
\sum_y p_x(y)\exp\!\left(\frac{r_\lambda(x,y)}{\beta}\right)
=
\mathbb{E}_{y\sim \pi_{\mathrm{ref}}(\cdot\mid x)}
\exp\!\left(\frac{r_\lambda(x,y)}{\beta}\right).
\end{equation}
This is exactly \eqref{eq:gibbs-policy}.

\smallskip
\noindent\textbf{Compute the optimal value exactly.}
We now evaluate the supremum in closed form by completing the KL divergence. By \eqref{eq:q-star-x},
\[
\log \pi_\lambda^*(y\mid x)
=
\log p_x(y)+\frac{r_\lambda(x,y)}{\beta}-\log Z_x(\lambda).
\]
Therefore, for any $q\ll p_x$,
\begin{align*}
D_{\mathrm{KL}}(q\|\pi_\lambda^*(\cdot\mid x))
&=
\mathbb{E}_{y\sim q}
\left[
\log\frac{q(y)}{\pi_\lambda^*(y\mid x)}
\right] \\
&=
\mathbb{E}_{y\sim q}
\left[
\log q(y)-\log p_x(y)-\frac{r_\lambda(x,y)}{\beta}+\log Z_x(\lambda)
\right] \\
&=
\mathbb{E}_{y\sim q}
\left[
\log\frac{q(y)}{p_x(y)}
\right]
-\frac{1}{\beta}\mathbb{E}_{y\sim q}r_\lambda(x,y)
+\log Z_x(\lambda) \\
&=
D_{\mathrm{KL}}(q\|p_x)
-\frac{1}{\beta}\mathbb{E}_{y\sim q}r_\lambda(x,y)
+\log Z_x(\lambda).
\end{align*}
Rearranging gives
\begin{equation}
\label{eq:complete-kl}
\mathbb{E}_{y\sim q}r_\lambda(x,y)
-\beta D_{\mathrm{KL}}(q\|p_x)
=
\beta\log Z_x(\lambda)
-\beta D_{\mathrm{KL}}(q\|\pi_\lambda^*(\cdot\mid x)).
\end{equation}
Since KL divergence is always nonnegative,
\[
D_{\mathrm{KL}}(q\|\pi_\lambda^*(\cdot\mid x))\ge 0,
\]
and therefore \eqref{eq:complete-kl} implies
\[
\mathbb{E}_{y\sim q}r_\lambda(x,y)
-\beta D_{\mathrm{KL}}(q\|p_x)
\le
\beta\log Z_x(\lambda),
\]
with equality if and only if $q=\pi_\lambda^*(\cdot\mid x)$ almost surely. Hence
\begin{equation}
\label{eq:Fx-sup}
\sup_{q\ll p_x}
\left\{
\mathbb{E}_{y\sim q} r_\lambda(x,y)
-\beta D_{\mathrm{KL}}(q\|p_x)
\right\}
=
\beta\log Z_x(\lambda).
\end{equation}
Substituting \eqref{eq:Fx-sup} into \eqref{eq:dual-decouple} yields
\[
g(\lambda)
=
\lambda\tau
+
\mathbb{E}_{x\sim D_x}\bigl[\beta\log Z_x(\lambda)\bigr].
\]
Using \eqref{eq:partition-x}, we obtain
\[
g(\lambda)
=
\lambda\tau
+
\beta\,\mathbb{E}_{x\sim D_x}
\log
\mathbb{E}_{y\sim \pi_{\mathrm{ref}}(\cdot\mid x)}
\exp\!\left(\frac{r_\lambda(x,y)}{\beta}\right).
\]
Finally, substituting $r_\lambda(x,y)=r(x,y)-\lambda c(x,y)$ gives
\[
g(\lambda)
=
\beta\,\mathbb{E}_{x\sim D_x}
\log
\mathbb{E}_{y\sim \pi_{\mathrm{ref}}(\cdot\mid x)}
\exp\!\left(\frac{r(x,y)-\lambda c(x,y)}{\beta}\right)
+\lambda\tau,
\]
which is exactly \eqref{eq:dual-objective}. This completes the proof.
\end{proof}

% \begin{lemma}[Derivative and Hessian of the dual objective]
% \label{lem:dual-derivatives}
% Assume differentiation under the expectation is justified. For
% \[
% g(\lambda)
% =
% \beta\,\mathbb{E}_{x\sim D_x}
% \log
% \mathbb{E}_{y\sim \pi_{\mathrm{ref}}(\cdot\mid x)}
% \exp\!\left(\frac{r(x,y)-\lambda c(x,y)}{\beta}\right)
% +\lambda\tau,
% \]
% the first and second derivatives are
% \begin{equation}
% \label{eq:g-prime}
% g'(\lambda)
% =
% \tau
% -
% \mathbb{E}_{x\sim D_x}\mathbb{E}_{y\sim \pi_\lambda^*(\cdot\mid x)} c(x,y),
% \end{equation}
% and
% \begin{equation}
% \label{eq:g-second}
% g''(\lambda)
% =
% \frac{1}{\beta}\,
% \mathbb{E}_{x\sim D_x}
% \mathrm{Var}_{y\sim \pi_\lambda^*(\cdot\mid x)}\!\bigl(c(x,y)\bigr)
% \ge 0.
% \end{equation}
% In particular, $g$ is convex on $[0,\infty)$.
% \end{lemma}

\section{Proof of Lemma~\ref{lem:dual-derivatives}}
\label{sec:lemma-proof}
\begin{proof}

Define
\[
A_x(\lambda)
:=
\beta\log
\mathbb{E}_{y\sim \pi_{\mathrm{ref}}(\cdot\mid x)}
\exp\!\left(\frac{r(x,y)-\lambda c(x,y)}{\beta}\right).
\]
Then
\[
g(\lambda)=\mathbb{E}_{x\sim D_x}A_x(\lambda)+\lambda\tau.
\]
Fix $x$ and write
\[
Z_x(\lambda)
:=
\mathbb{E}_{y\sim \pi_{\mathrm{ref}}(\cdot\mid x)}
\exp\!\left(\frac{r(x,y)-\lambda c(x,y)}{\beta}\right).
\]
Then
\[
A_x(\lambda)=\beta\log Z_x(\lambda).
\]
Differentiate:
\begin{align*}
A_x'(\lambda)
&=
\beta\,\frac{Z_x'(\lambda)}{Z_x(\lambda)}.
\end{align*}
Now
\begin{align*}
Z_x'(\lambda)
&=
\mathbb{E}_{y\sim \pi_{\mathrm{ref}}(\cdot\mid x)}
\left[
\exp\!\left(\frac{r(x,y)-\lambda c(x,y)}{\beta}\right)
\cdot \left(-\frac{c(x,y)}{\beta}\right)
\right].
\end{align*}
Therefore
\begin{align*}
A_x'(\lambda)
&=
\beta\,
\frac{
\mathbb{E}_{y\sim \pi_{\mathrm{ref}}(\cdot\mid x)}
\left[
\exp\!\left(\frac{r(x,y)-\lambda c(x,y)}{\beta}\right)
\left(-\frac{c(x,y)}{\beta}\right)
\right]
}{
\mathbb{E}_{y\sim \pi_{\mathrm{ref}}(\cdot\mid x)}
\exp\!\left(\frac{r(x,y)-\lambda c(x,y)}{\beta}\right)
} \\
&=
-
\frac{
\mathbb{E}_{y\sim \pi_{\mathrm{ref}}(\cdot\mid x)}
\left[
c(x,y)\exp\!\left(\frac{r(x,y)-\lambda c(x,y)}{\beta}\right)
\right]
}{
\mathbb{E}_{y\sim \pi_{\mathrm{ref}}(\cdot\mid x)}
\exp\!\left(\frac{r(x,y)-\lambda c(x,y)}{\beta}\right)
}.
\end{align*}
By the definition of $\pi_\lambda^*$ in \eqref{eq:gibbs-policy}, the ratio above is exactly the expectation of $c(x,y)$ under $\pi_\lambda^*(\cdot\mid x)$, so
\[
A_x'(\lambda)
=
-\mathbb{E}_{y\sim \pi_\lambda^*(\cdot\mid x)} c(x,y).
\]
Taking expectation over $x$ and differentiating $\lambda\tau$ gives
\[
g'(\lambda)
=
\mathbb{E}_{x\sim D_x}A_x'(\lambda)+\tau
=
\tau
-
\mathbb{E}_{x\sim D_x}\mathbb{E}_{y\sim \pi_\lambda^*(\cdot\mid x)} c(x,y),
\]
which proves \eqref{eq:dual-derivative}.

To compute the second derivative, write
\[
w_\lambda(y\mid x)
:=
\frac{
\pi_{\mathrm{ref}}(y\mid x)\exp((r(x,y)-\lambda c(x,y))/\beta)
}{
Z_x(\lambda)
}
=
\pi_\lambda^*(y\mid x).
\]
Then
\[
A_x'(\lambda)=-\mathbb{E}_{w_\lambda}[c(x,y)].
\]
Differentiate once more. Using the standard derivative of an exponential-family expectation,
\[
\frac{d}{d\lambda}\mathbb{E}_{w_\lambda}[c(x,y)]
=
-\frac{1}{\beta}\,
\mathrm{Var}_{w_\lambda}(c(x,y)).
\]
Hence
\[
A_x''(\lambda)
=
\frac{1}{\beta}\,
\mathrm{Var}_{y\sim \pi_\lambda^*(\cdot\mid x)}(c(x,y)).
\]
Therefore
\[
g''(\lambda)
=
\mathbb{E}_{x\sim D_x}A_x''(\lambda)
=
\frac{1}{\beta}\,
\mathbb{E}_{x\sim D_x}
\mathrm{Var}_{y\sim \pi_\lambda^*(\cdot\mid x)}(c(x,y))
\ge 0.
\]
This proves \eqref{eq:dual-hessian}, and convexity follows immediately.
\end{proof}

\section{Proof of Theorem~\ref{thm:finite-sample-empirical}}
\label{app:duality-conc}
The proof combines standard Hoeffding-type concentration on a finite grid, sample-average approximation arguments for stochastic programs, and classical convex M-estimation / argmin localization under strong convexity~\citep{van2000asymptotic}.

\begin{proof}
\textbf{Empirical tilted quantities.}
For fixed $(x,y_{1:K},\lambda)$, define
\begin{equation}\label{eq:signed_proof_w_hatZ}
w_k(\lambda)
:=
\exp\!\left(\frac{r(x,y_k)-\lambda c(x,y_k)}{\beta}\right),
\qquad
\widehat Z_{x,K}(\lambda):=\frac{1}{K}\sum_{k=1}^K w_k(\lambda),
\end{equation}
and the empirical tilted weights
\begin{equation}\label{eq:signed_proof_hatp}
\widehat p_\lambda(k\mid x,y_{1:K})
:=
\frac{w_k(\lambda)}{\sum_{\ell=1}^K w_\ell(\lambda)},
\qquad k=1,\dots,K.
\end{equation}
Define the empirical tilted cost
\begin{equation}\label{eq:signed_proof_hatc}
\widehat c_{\lambda,K}(x,y_{1:K})
:=
\sum_{k=1}^K \widehat p_\lambda(k\mid x,y_{1:K})\,c(x,y_k).
\end{equation}
Also define
\begin{equation}\label{eq:signed_proof_hatphi_local}
\widehat\phi_K(x,y_{1:K},\lambda)
:=
\beta\log \widehat Z_{x,K}(\lambda)+\lambda\tau.
\end{equation}

\textbf{Derivatives of the empirical inner loss.}
Differentiating \eqref{eq:signed_proof_w_hatZ} with respect to $\lambda$ gives
\begin{equation}\label{eq:signed_proof_hatZprime}
\frac{d}{d\lambda}\widehat Z_{x,K}(\lambda)
=
-\frac{1}{\beta K}\sum_{k=1}^K c(x,y_k)w_k(\lambda).
\end{equation}
Hence, using \eqref{eq:signed_proof_hatphi_local}, \eqref{eq:signed_proof_w_hatZ}, and \eqref{eq:signed_proof_hatc},
\begin{equation}\label{eq:signed_proof_hatphi_prime}
\widehat\phi_K'(x,y_{1:K},\lambda)
=
\beta \frac{\widehat Z_{x,K}'(\lambda)}{\widehat Z_{x,K}(\lambda)}+\tau
=
-\frac{\sum_{k=1}^K c(x,y_k)w_k(\lambda)}
{\sum_{k=1}^K w_k(\lambda)}
+\tau
=
\tau-\widehat c_{\lambda,K}(x,y_{1:K}).
\end{equation}
Differentiating once more yields
\begin{equation}\label{eq:signed_proof_hatphi_second}
\widehat\phi_K''(x,y_{1:K},\lambda)
=
\frac{1}{\beta}
\operatorname{Var}_{J\sim \widehat p_\lambda(\cdot\mid x,y_{1:K})}
\!\bigl(c(x,y_J)\bigr).
\end{equation}
Since $-C\le c(x,y)\le C$, \eqref{eq:signed_proof_hatc} and \eqref{eq:signed_proof_hatphi_second} imply
\begin{equation}\label{eq:signed_proof_hatc_hatphi_bounds}
-C\le \widehat c_{\lambda,K}(x,y_{1:K})\le C,
\qquad
0\le \widehat\phi_K''(x,y_{1:K},\lambda)\le \frac{C^2}{\beta}.
\end{equation}
Combining \eqref{eq:signed_proof_hatphi_prime} with \eqref{eq:signed_proof_hatc_hatphi_bounds} and $-C\le \tau\le C$, we obtain
\begin{equation}\label{eq:signed_proof_hatphi_prime_bound}
\bigl|\widehat\phi_K'(x,y_{1:K},\lambda)\bigr|\le 2C
\qquad
\text{for all }(x,y_{1:K},\lambda)\in \mathcal{X}\times \mathcal{Y}^K\times I.
\end{equation}

\textbf{Monte Carlo population objective.}
Define
\begin{equation}\label{eq:signed_proof_gK}
g_K(\lambda)
:=
\mathbb{E}_{X\sim D_x}
\mathbb{E}_{Y_1,\dots,Y_K \stackrel{\mathrm{i.i.d.}}{\sim}\pi_{\mathrm{ref}}(\cdot\mid X)}
\bigl[\widehat\phi_K(X,Y_{1:K},\lambda)\bigr].
\end{equation}
Since $\widehat\phi_K(\cdot,\cdot,\lambda)$ is differentiable in $\lambda$ and the uniform bound \eqref{eq:signed_proof_hatphi_prime_bound} holds on $I$, dominated convergence applied to the difference quotients implies
\begin{equation}\label{eq:signed_proof_gK_prime}
g_K'(\lambda)
=
\mathbb{E}_{X\sim D_x}
\mathbb{E}_{Y_1,\dots,Y_K \stackrel{\mathrm{i.i.d.}}{\sim}\pi_{\mathrm{ref}}(\cdot\mid X)}
\bigl[\widehat\phi_K'(X,Y_{1:K},\lambda)\bigr].
\end{equation}
Also, by \eqref{eq:thm_empirical_inner_hatphi_hatg},
\begin{equation}\label{eq:signed_proof_hatg_prime}
\widehat g_{N,K}'(\lambda)
=
\frac{1}{N}\sum_{i=1}^N \widehat\phi_{i,K}'(\lambda).
\end{equation}
Finally, \eqref{eq:signed_proof_hatc_hatphi_bounds} implies that both $g_K'$ and $\widehat g_{N,K}'$ are Lipschitz on $I$ with constant
\begin{equation}\label{eq:signed_proof_L}
L:=\frac{C^2}{\beta}.
\end{equation}

\textbf{Exact tilted quantities and derivative of $g$.}
For fixed $(x,\lambda)$, define the exact tilted distribution
\begin{equation}\label{eq:signed_proof_exact_tilt}
\pi_\lambda^\star(y\mid x)
:=
\frac{
\pi_{\mathrm{ref}}(y\mid x)
\exp\!\left(\frac{r(x,y)-\lambda c(x,y)}{\beta}\right)
}{
Z_x(\lambda)
},
\end{equation}
and the corresponding tilted cost
\begin{equation}\label{eq:signed_proof_exact_tilted_cost}
\bar c_\lambda(x)
:=
\mathbb{E}_{Y\sim \pi_\lambda^\star(\cdot\mid x)}[c(x,Y)].
\end{equation}
Differentiating \eqref{eq:thm_empirical_inner_Z_phi} gives
\begin{equation}\label{eq:signed_proof_Zprime_exact}
Z_x'(\lambda)
=
\mathbb{E}_{Y\sim \pi_{\mathrm{ref}}(\cdot\mid x)}
\left[
-\frac{c(x,Y)}{\beta}
\exp\!\left(\frac{r(x,Y)-\lambda c(x,Y)}{\beta}\right)
\right],
\end{equation}
and hence
\begin{equation}\label{eq:signed_proof_phi_prime_exact}
\phi'(x,\lambda)
=
\beta \frac{Z_x'(\lambda)}{Z_x(\lambda)}+\tau
=
\tau-\bar c_\lambda(x).
\end{equation}
Differentiating once more yields
\begin{equation}\label{eq:signed_proof_phi_second_exact}
\phi''(x,\lambda)
=
\frac{1}{\beta}
\operatorname{Var}_{Y\sim \pi_\lambda^\star(\cdot\mid x)}
\!\bigl(c(x,Y)\bigr).
\end{equation}
Since $-C\le c(x,y)\le C$, \eqref{eq:signed_proof_phi_prime_exact} implies
\begin{equation}\label{eq:signed_proof_phi_prime_bound_exact}
|\phi'(x,\lambda)|\le 2C.
\end{equation}
Therefore, dominated convergence applied to the difference quotients gives
\begin{equation}\label{eq:signed_proof_g_prime_exact}
g'(\lambda)
=
\mathbb{E}_{X\sim D_x}[\phi'(X,\lambda)]
=
\tau
-
\mathbb{E}_{X\sim D_x}
\mathbb{E}_{Y\sim \pi_\lambda^\star(\cdot\mid X)}
[c(X,Y)].
\end{equation}

\textbf{Concentration on a finite grid.}
Fix an integer $M\ge 1$ and define the grid
\begin{equation}\label{eq:signed_proof_grid}
\lambda_j:=\frac{j\Lambda}{M},
\qquad j=0,1,\dots,M,
\end{equation}
with spacing
\begin{equation}\label{eq:signed_proof_grid_spacing}
\Delta:=\frac{\Lambda}{M}.
\end{equation}
For each grid point $\lambda_j$, let
\begin{equation}\label{eq:signed_proof_Uij}
U_i^{(j)}:=\widehat\phi_{i,K}'(\lambda_j).
\end{equation}
For each fixed $j$, the random variables $U_1^{(j)},\dots,U_N^{(j)}$ are i.i.d.\ across $i$, and by \eqref{eq:signed_proof_hatphi_prime_bound},
\begin{equation}\label{eq:signed_proof_Uij_bound}
-2C\le U_i^{(j)}\le 2C
\qquad\text{a.s.}
\end{equation}
Therefore, Hoeffding's inequality gives, for every $t>0$,
\begin{equation}\label{eq:signed_proof_hoeffding_grid}
\mathbb{P}\!\left(
\left|
\widehat g_{N,K}'(\lambda_j)-g_K'(\lambda_j)
\right|>t
\right)
\le
2\exp\!\left(-\frac{Nt^2}{8C^2}\right).
\end{equation}
Applying the union bound over $j=0,\dots,M$ yields
\begin{equation}\label{eq:signed_proof_union_bound_grid}
\mathbb{P}\!\left(
\max_{0\le j\le M}
\left|
\widehat g_{N,K}'(\lambda_j)-g_K'(\lambda_j)
\right|>t
\right)
\le
2(M+1)\exp\!\left(-\frac{Nt^2}{8C^2}\right).
\end{equation}
Set
\begin{equation}\label{eq:signed_proof_tNM}
t_{N,M}(\delta)
:=
2C\sqrt{\frac{2\log(2(M+1)/\delta)}{N}}.
\end{equation}
Then \eqref{eq:signed_proof_union_bound_grid} implies that, with probability at least $1-\delta$,
\begin{equation}\label{eq:signed_proof_grid_event}
\max_{0\le j\le M}
\left|
\widehat g_{N,K}'(\lambda_j)-g_K'(\lambda_j)
\right|
\le
t_{N,M}(\delta).
\end{equation}

\textbf{Uniform derivative control relative to $g_K'$.}
On the event \eqref{eq:signed_proof_grid_event}, we claim that
\begin{equation}\label{eq:signed_proof_uniform_hatg_minus_gK}
\sup_{\lambda\in I}
\left|
\widehat g_{N,K}'(\lambda)-g_K'(\lambda)
\right|
\le
t_{N,M}(\delta)+L\Delta.
\end{equation}
Fix $\lambda\in I$, and choose $j\in\{0,\dots,M-1\}$ such that $\lambda\in[\lambda_j,\lambda_{j+1}]$. Since $\widehat g_{N,K}$ is convex and differentiable, its derivative $\widehat g_{N,K}'$ is monotone nondecreasing, and therefore
\begin{equation}\label{eq:signed_proof_monotonicity_hatgprime}
\widehat g_{N,K}'(\lambda_j)
\le
\widehat g_{N,K}'(\lambda)
\le
\widehat g_{N,K}'(\lambda_{j+1}).
\end{equation}
Using \eqref{eq:signed_proof_monotonicity_hatgprime}, \eqref{eq:signed_proof_grid_event}, and the Lipschitz bound \eqref{eq:signed_proof_L}, we obtain
\begin{equation}\label{eq:signed_proof_upper_uniform}
\widehat g_{N,K}'(\lambda)-g_K'(\lambda)
\le
t_{N,M}(\delta)+L\Delta,
\end{equation}
and similarly
\begin{equation}\label{eq:signed_proof_lower_uniform}
\widehat g_{N,K}'(\lambda)-g_K'(\lambda)
\ge
-\,t_{N,M}(\delta)-L\Delta.
\end{equation}
Combining \eqref{eq:signed_proof_upper_uniform} and \eqref{eq:signed_proof_lower_uniform} yields \eqref{eq:signed_proof_uniform_hatg_minus_gK}.

\textbf{Inner Monte Carlo bias.}
Fix $x$ and $\lambda\in I$. Let
\begin{equation}\label{eq:signed_proof_UV}
V
:=
\exp\!\left(\frac{r(x,Y)-\lambda c(x,Y)}{\beta}\right),
\qquad
U:=c(x,Y)V,
\end{equation}
where
\begin{equation}\label{eq:signed_proof_Ylaw}
Y\sim \pi_{\mathrm{ref}}(\cdot\mid x).
\end{equation}
Define
\begin{equation}\label{eq:signed_proof_uv}
u:=\mathbb{E}_{Y\sim \pi_{\mathrm{ref}}(\cdot\mid x)}[U],
\qquad
v:=\mathbb{E}_{Y\sim \pi_{\mathrm{ref}}(\cdot\mid x)}[V].
\end{equation}
Then \eqref{eq:signed_proof_exact_tilted_cost} implies
\begin{equation}\label{eq:signed_proof_exact_cost_ratio}
\bar c_\lambda(x)=\frac{u}{v}.
\end{equation}
Let $Y_1,\dots,Y_K \stackrel{\mathrm{i.i.d.}}{\sim}\pi_{\mathrm{ref}}(\cdot\mid x)$ and define
\begin{equation}\label{eq:signed_proof_hatuv}
\widehat u:=\frac{1}{K}\sum_{k=1}^K U_k,
\qquad
\widehat v:=\frac{1}{K}\sum_{k=1}^K V_k,
\end{equation}
where
\begin{equation}\label{eq:signed_proof_UkVk}
U_k:=c(x,Y_k)\exp\!\left(\frac{r(x,Y_k)-\lambda c(x,Y_k)}{\beta}\right),
\qquad
V_k:=\exp\!\left(\frac{r(x,Y_k)-\lambda c(x,Y_k)}{\beta}\right).
\end{equation}
Then \eqref{eq:signed_proof_hatc} implies
\begin{equation}\label{eq:signed_proof_hatc_ratio}
\widehat c_{\lambda,K}(x,Y_{1:K})=\frac{\widehat u}{\widehat v}.
\end{equation}

Because $|r(x,y)|\le R$, $-C\le c(x,y)\le C$, and $0\le \lambda\le \Lambda$,
\begin{equation}\label{eq:signed_proof_reward_cost_bounds}
-(R+\Lambda C)\le r(x,y)-\lambda c(x,y)\le R+\Lambda C,
\end{equation}
and therefore, using \eqref{eq:thm_empirical_inner_ab},
\begin{equation}\label{eq:signed_proof_V_bounds}
a
\le
V_k
\le
b.
\end{equation}
Hence
\begin{equation}\label{eq:signed_proof_UV_bounds}
-Cb\le U_k\le Cb,
\qquad
v\ge a,
\qquad
\widehat v\ge a
\quad\text{a.s.}
\end{equation}
Using \eqref{eq:signed_proof_exact_cost_ratio}, \eqref{eq:signed_proof_hatuv}, \eqref{eq:signed_proof_hatc_ratio}, and \eqref{eq:signed_proof_UV_bounds}, we have
\begin{equation}\label{eq:signed_proof_ratio_difference}
\left|
\widehat c_{\lambda,K}(x,Y_{1:K})-\bar c_\lambda(x)
\right|
=
\left|
\frac{\widehat u}{\widehat v}-\frac{u}{v}
\right|
\le
\frac{|\widehat u-u|}{a}
+
\frac{|u|}{a^2}|\widehat v-v|.
\end{equation}
Since $|u|\le Cb$, \eqref{eq:signed_proof_ratio_difference} implies
\begin{equation}\label{eq:signed_proof_ratio_difference_simplified}
\left|
\widehat c_{\lambda,K}(x,Y_{1:K})-\bar c_\lambda(x)
\right|
\le
\frac{|\widehat u-u|}{a}
+
\frac{Cb}{a^2}|\widehat v-v|.
\end{equation}
Taking conditional expectation over $Y_1,\dots,Y_K \stackrel{\mathrm{i.i.d.}}{\sim}\pi_{\mathrm{ref}}(\cdot\mid x)$, we obtain
\begin{equation}\label{eq:signed_proof_hat_u_deviation}
\mathbb{E}\!\left[
|\widehat u-u|
\,\middle|\,
X=x
\right]
\le
\sqrt{\operatorname{Var}(\widehat u\mid X=x)}
\le
\frac{Cb}{\sqrt{K}},
\end{equation}
because $U_k\in[-Cb,Cb]$, and
\begin{equation}\label{eq:signed_proof_hat_v_deviation}
\mathbb{E}\!\left[
|\widehat v-v|
\,\middle|\,
X=x
\right]
\le
\sqrt{\operatorname{Var}(\widehat v\mid X=x)}
\le
\frac{b}{2\sqrt{K}},
\end{equation}
because $V_k\in[a,b]\subseteq[0,b]$.
Therefore, using \eqref{eq:signed_proof_ratio_difference_simplified}, \eqref{eq:signed_proof_hat_u_deviation}, and \eqref{eq:signed_proof_hat_v_deviation},
\begin{equation}\label{eq:signed_proof_hatc_bias_bound}
\mathbb{E}\!\left[
\left|
\widehat c_{\lambda,K}(x,Y_{1:K})-\bar c_\lambda(x)
\right|
\,\middle|\,
X=x
\right]
\le
\frac{Cb}{a\sqrt{K}}
+
\frac{Cb^2}{2a^2\sqrt{K}}
=
B_K.
\end{equation}
This bound is uniform in $x$ and $\lambda\in I$.

\textbf{Uniform comparison with the exact derivative.}
Using \eqref{eq:signed_proof_gK_prime}, \eqref{eq:signed_proof_hatphi_prime}, and \eqref{eq:signed_proof_g_prime_exact}, we have
\begin{equation}\label{eq:signed_proof_gKprime_exact}
g_K'(\lambda)
=
\tau
-
\mathbb{E}_{X\sim D_x}
\mathbb{E}_{Y_1,\dots,Y_K \stackrel{\mathrm{i.i.d.}}{\sim}\pi_{\mathrm{ref}}(\cdot\mid X)}
\bigl[
\widehat c_{\lambda,K}(X,Y_{1:K})
\bigr].
\end{equation}
Hence \eqref{eq:signed_proof_hatc_bias_bound} implies that, for every $\lambda\in I$,
\begin{equation}\label{eq:signed_proof_gK_minus_gprime}
|g_K'(\lambda)-g'(\lambda)|\le B_K.
\end{equation}
Combining \eqref{eq:signed_proof_uniform_hatg_minus_gK} with \eqref{eq:signed_proof_gK_minus_gprime}, we conclude that on the event \eqref{eq:signed_proof_grid_event},
\begin{equation}\label{eq:signed_proof_uniform_hatg_minus_g}
\sup_{\lambda\in I}
\left|
\widehat g_{N,K}'(\lambda)-g'(\lambda)
\right|
\le
t_{N,M}(\delta)+L\Delta+B_K.
\end{equation}
Set
\begin{equation}\label{eq:signed_proof_epsilon_temp}
\varepsilon:=t_{N,M}(\delta)+L\Delta+B_K.
\end{equation}

\textbf{Localization of the empirical minimizer.}
Since $g$ has a unique minimizer $\lambda^\star\in(0,\Lambda)$, we have
\begin{equation}\label{eq:signed_proof_gprime_at_minimizer}
g'(\lambda^\star)=0.
\end{equation}
From Lemma~\ref{lem:dual-derivatives}, for every $\lambda\ge \lambda^\star$,
\begin{equation}\label{eq:signed_proof_right_growth}
g'(\lambda)
=
\int_{\lambda^\star}^{\lambda} g''(u)\,du
\ge
\mu(\lambda-\lambda^\star),
\end{equation}
while for every $\lambda\le \lambda^\star$,
\begin{equation}\label{eq:signed_proof_left_growth}
g'(\lambda)
=
-\int_{\lambda}^{\lambda^\star} g''(u)\,du
\le
-\mu(\lambda^\star-\lambda).
\end{equation}
Therefore, if $\lambda\ge \lambda^\star+\varepsilon/\mu$, then \eqref{eq:signed_proof_right_growth} implies $g'(\lambda)\ge \varepsilon$, and hence, by \eqref{eq:signed_proof_uniform_hatg_minus_g},
\begin{equation}\label{eq:signed_proof_hatgprime_nonnegative}
\widehat g_{N,K}'(\lambda)
\ge
g'(\lambda)-\varepsilon
\ge 0.
\end{equation}
Likewise, if $\lambda\le \lambda^\star-\varepsilon/\mu$, then \eqref{eq:signed_proof_left_growth} implies $g'(\lambda)\le -\varepsilon$, and hence
\begin{equation}\label{eq:signed_proof_hatgprime_nonpositive}
\widehat g_{N,K}'(\lambda)
\le
g'(\lambda)+\varepsilon
\le 0.
\end{equation}
If $\varepsilon<\mu m$, where $m$ is defined in \eqref{eq:thm_empirical_inner_m}, then the interval
\begin{equation}\label{eq:signed_proof_localization_interval}
\left[
\lambda^\star-\frac{\varepsilon}{\mu},
\lambda^\star+\frac{\varepsilon}{\mu}
\right]
\end{equation}
lies strictly inside $I$. By convexity of $\widehat g_{N,K}$ together with \eqref{eq:signed_proof_hatgprime_nonnegative} and \eqref{eq:signed_proof_hatgprime_nonpositive}, every minimizer of $\widehat g_{N,K}$ over $I$ must lie in the interval \eqref{eq:signed_proof_localization_interval}. Hence
\begin{equation}\label{eq:signed_proof_hatlambda_bound_eps}
\bigl|\widehat\lambda_{N,K}-\lambda^\star\bigr|
\le
\frac{\varepsilon}{\mu}.
\end{equation}

\textbf{Final choice of grid size.}
Choose $M=N$. Then \eqref{eq:signed_proof_grid_spacing} and \eqref{eq:signed_proof_tNM} become
\begin{equation}\label{eq:signed_proof_grid_choice}
\Delta=\frac{\Lambda}{N},
\qquad
t_{N,N}(\delta)
=
2C\sqrt{\frac{2\log(2(N+1)/\delta)}{N}}.
\end{equation}
Substituting \eqref{eq:signed_proof_grid_choice}, \eqref{eq:signed_proof_L}, and \eqref{eq:thm_empirical_inner_BK} into \eqref{eq:signed_proof_epsilon_temp}, we obtain
\begin{equation}\label{eq:signed_proof_epsilon_equals_final}
\varepsilon
=
2C\sqrt{\frac{2\log(2(N+1)/\delta)}{N}}
+
\frac{C^2\Lambda}{\beta N}
+
B_K
=
\varepsilon_{N,K}(\delta).
\end{equation}
Combining \eqref{eq:signed_proof_hatlambda_bound_eps} with \eqref{eq:signed_proof_epsilon_equals_final}, and using the probability of the event \eqref{eq:signed_proof_grid_event}, yields
\begin{equation}\label{eq:signed_proof_final_probability}
\mathbb{P}\!\left(
\bigl|\widehat\lambda_{N,K}-\lambda^\star\bigr|
\le
\frac{\varepsilon_{N,K}(\delta)}{\mu}
\right)
\ge 1-\delta.
\end{equation}
This is exactly \eqref{eq:thm_empirical_inner_conclusion}.
\end{proof}
\section{Implementation Details}
\label{ap:implementation-details}
\subsection{Baselines}

We build on top of the Safe-RLHF codebase\footnote{\url{https://github.com/pku-alignment/safe-rlhf}}. 
For Safe-RLHF \citep{dai2023safe}, we use the hyperparameters reported in their paper. We set the cost threshold $\tau = -4$ for Llama and $\tau = 0.8$ for Qwen. This is because the Llama SFT model already achieves a low expected cost, making a stricter threshold appropriate, whereas the Qwen SFT model requires a more permissive threshold.  
SafeDPO \citep{kim2025safedpo} does not have a publicly available implementation; we 
implement their approach by creating the data transformation script that augments the 
preference data as described in their paper, and use the DPO implementation from the 
Safe-RLHF codebase with this transformed data, using their reported hyperparameters.

For the Reward Guided Text Generation (RGTG) methods, we use the publicly available 
PARGS codebase\footnote{\url{https://github.com/ahmadrash/PARGS}} to create the 
preference dataset over partial sequences, and train the partial reward model using 
the reward model trainer provided in the Safe-RLHF codebase. CD-Q \citep{mudgal2023controlled} 
does not have a publicly available implementation; we implement the value function 
trainer from scratch. To improve efficiency, we model the value function as a causal language model, predicting 
the values of all possible next tokens in the vocabulary in a single forward pass, thereby 
speeding up both training and inference. This follows the same strategy used in 
FaRMA~\citep{rashid2025towards} to accelerate decoding.

Safe-RLHF was trained on 4 NVIDIA A100 80GB GPUs. SafeDPO, the reward and cost models, 
partial reward and cost models, and reward and cost value functions were all trained on 
2 NVIDIA A100 80GB GPUs.

\subsection{LARA}

We use a small partition of the BeaverTails \citep{ji2023beavertails} training dataset as the calibration dataset for estimating the dual parameter $\lambda^*$, consisting of 2,000 prompts with 20 generated responses per prompt. Responses were generated from the base policy using vLLM \citep{kwon2023efficientmemorymanagementlarge}. We set the cost threshold $\tau = -4$ for Llama and $\tau = 0.8$ for Qwen, the same 
values used for the baselines. As noted in Section~\ref{sec:model}, to account for potential distribution mismatch between the 
calibration and deployment datasets, we adopt the principle of pessimism. Specifically, 
we compute the empirical distribution of $\lambda^*$ via bootstrapping and use the 97.5th 
percentile (the upper end of the 95\% confidence interval) as our estimate of $\lambda^*$. 
We generate 10,000 bootstrap samples of $\lambda^*$, which completes in approximately 
2 minutes owing to the efficiency of binary search.

\subsection{Inference}

We use vLLM \citep{kwon2023efficientmemorymanagementlarge} to generate completions for Best-of-$N$. For ARGS and PARGS, we use the 
decoding code available in the PARGS codebase.
For CD-Q, we implement our own inference script, which achieves faster decoding than 
ARGS and PARGS owing to the causal language model structure of the value function, 
as described above. 

During inference, we use top-$k$ sampling with $k=50$ and temperature $1.2$. For 
Best-of-$N$, we use $N=20$. We note that it is important to choose $N$ to be at least 
as large as the number of responses per prompt used during calibration. A smaller $N$ 
would result in an overly aggressive $\lambda^*$, as BoN would not explore as far from 
the base policy as was observed during calibration. For RGTG methods (ARGS, PARGS, CD-Q), 
we consider the top $k=50$ tokens at each decoding step with weighting $w=2$.

\section{Additional Results}\label{ap:additional-results}

We provide the evaluation results for Qwen in Figure~\ref{fig:eval_qwen}.

\begin{figure}[t]
    \centering
    \begin{subfigure}[b]{0.48\textwidth}
        \centering
        \includegraphics[width=\textwidth]{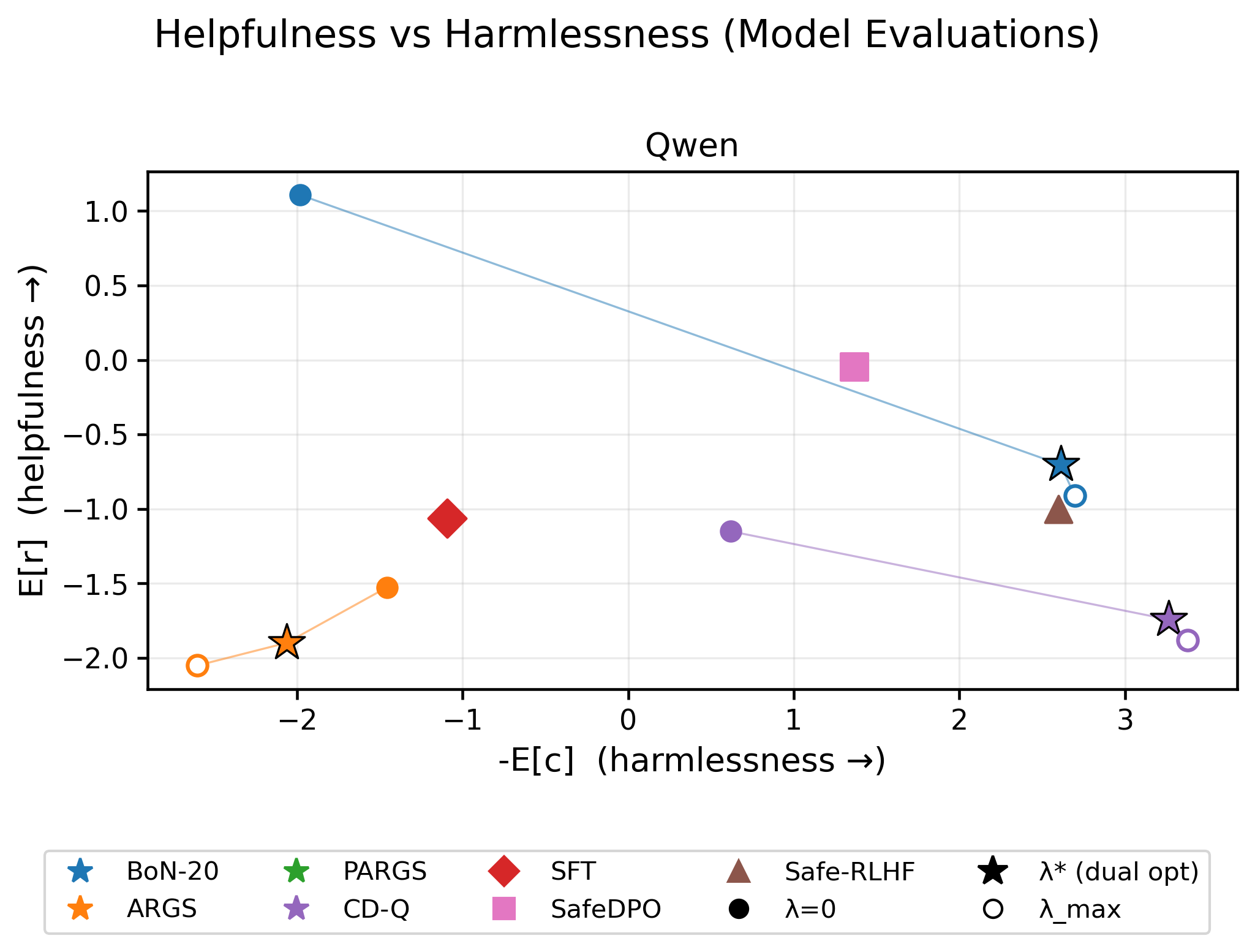}
        \caption{Model Evaluation}
        \label{fig:model_eval_qwen}
    \end{subfigure}
    \hfill
    \begin{subfigure}[b]{0.48\textwidth}
        \centering
        \includegraphics[width=\textwidth]{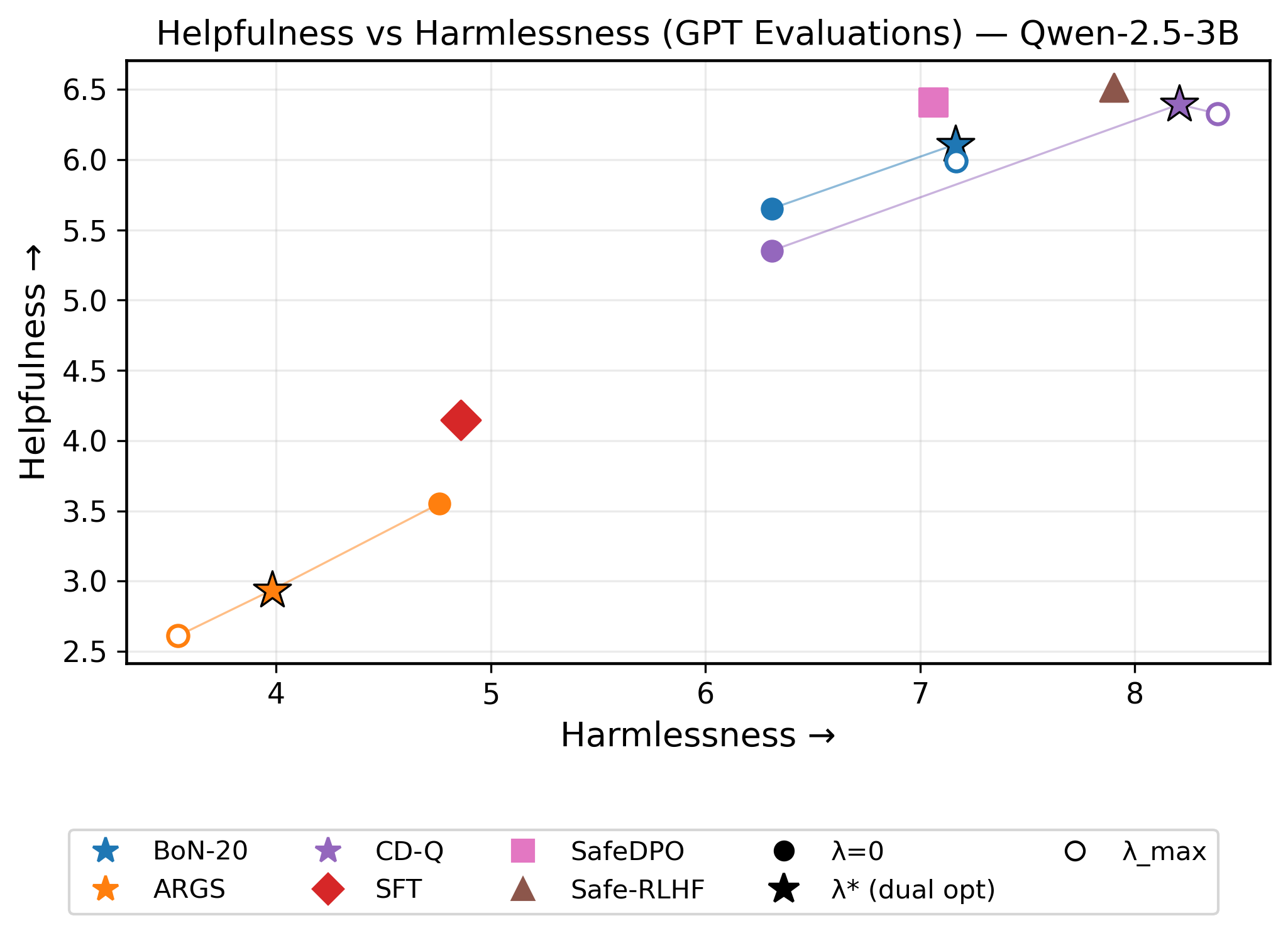}
        \caption{GPT Evaluation}
        \label{fig:gpt_eval_qwen}
    \end{subfigure}
    \caption{Helpfulness vs Harmfulness for Qwen2.5-3b under both model evaluation (left) and GPT evaluation (right). We observed that PARGS performed similarly to ARGS for Llama, and hence omit it from the figure. $\lambda^*$ for Qwen is $2.0$ and $\lambda_{max}$ is chosen to be $3.0$.}
    \label{fig:eval_qwen}
\end{figure}

\end{document}